\begin{document}
\title{Preference-Informed Fairness}

\author{
Michael P. Kim\thanks{Stanford University.   Part of this work completed while visiting the Weizmann Institute of Science. Supported, in part, by a Google Faculty Research Award, CISPA Center for Information Security, and the Stanford Data Science Initiative.
\texttt{mpk@cs.stanford.edu}} 
\and
Aleksandra Korolova\thanks{University of Southern California.  Part of this work completed while visiting the Weizmann Institute of Science.  Another part completed while visiting the Simons Institute for the Theory of Computing. \texttt{korolova@usc.edu}} 
\and
Guy N. Rothblum\thanks{Weizmann Institute of Science.  \texttt{rothblum@alum.mit.edu}. Research supported by the ISRAEL SCIENCE FOUNDATION (grant number 5219/17).} 
\and
Gal Yona\thanks{Weizmann Institute of Science.  \texttt{gal.yona@weizmann.ac.il}. Research supported by the ISRAEL SCIENCE FOUNDATION (grant number 5219/17). 
} 
}
\setcounter{footnote}{0}

\maketitle

\begin{abstract}
In this work, we study notions of fairness in decision-making systems when individuals have diverse preferences over the possible outcomes of the decisions.
Our starting point is the seminal work of Dwork \emph{et al.} [ITCS 2012] which introduced a notion of \emph{individual fairness} (IF): given a task-specific similarity metric, every pair of individuals who are similarly qualified according to the metric should receive similar outcomes. 
We show that when individuals have diverse preferences over outcomes, requiring IF may unintentionally lead to less-preferred outcomes for the very individuals that IF aims to protect (e.g.\ a protected minority group).
A natural alternative to IF is the classic notion of fair division, \emph{envy-freeness} (EF): no individual should prefer another individual's outcome over their own.
Although EF allows for solutions where all individuals receive a highly-preferred outcome, EF may also be overly-restrictive for the decision-maker.
For instance, if many individuals agree on the best outcome, then if any individual receives this outcome, they all must receive it,  regardless of each individual's underlying qualifications for the outcome.

We introduce and study a new notion of \emph{preference-informed individual fairness} (PIIF) that is a relaxation of both individual fairness and envy-freeness.
At a high-level, PIIF requires that outcomes satisfy IF-style constraints, but allows for deviations provided they are in line with individuals' preferences.
We show that PIIF can permit outcomes that are more favorable to individuals than any IF solution, while providing considerably more flexibility to the decision-maker than EF.
In addition, we show how to efficiently optimize any convex objective over the outcomes subject to PIIF for a rich class of individual preferences.
Finally, we demonstrate the broad applicability of the PIIF framework by extending our definitions and algorithms to the multiple-task targeted advertising setting introduced by Dwork and Ilvento [ITCS 2019].
\end{abstract}

\newcommand{\remove}[1]{}

\pagenumbering{gobble}

\newpage
\pagenumbering{arabic}

\section{Introduction}
\label{sec:intro}
Increasingly, algorithms are used to make consequential decisions about individuals.
Examples range from determining which content users see online to deciding which applicants are considered in lending and hiring decisions.
Automated decision-making comes with benefits, but it also raises substantial societal concerns (cf. \cite{o2017weapons} for a recent perspective). One prominent concern
is that these algorithms might discriminate against individuals or groups in a way that violates laws or social and ethical norms~\cite{ali2019discrimination, tobin2019pb, amazon2018hiring, AlltheWa25:online}. Thus there is an urgent need for frameworks and tools to mitigate the risks of algorithmic discrimination. A growing literature 
attempts to tackle these challenges by exploring different fairness criteria and ways to achieve them.

One prominent framework for establishing fairness in algorithmic decision-making systems comes from the seminal work of Dwork~\emph{et al.} \cite{dwork2012fairness}, which introduced the notion of {\em individual fairness} (IF).
IF relies on a task-specific similarity metric that specifies, for every pair of individuals, how similar they are with respect to the task at hand.
Given such a metric, individual fairness requires that similar individuals (according to the metric) be treated similarly, i.e.,\ assigned similar outcome distributions.
This is formalized via a Lipschitz condition, requiring that for any two individuals $i$ and $j$, the distance between their outcome distributions is bounded by their distance according to the metric.
Although coming up with a good metric can be challenging, metrics arise naturally in prominent existing examples (e.g.\ credit or insurance risk scores), and in natural scenarios (e.g.\ a metric specified by an external regulator).
Given an appropriate metric, individual fairness provides powerful protections from discrimination.

\paragraph{Accounting for individuals' preferences.} Our work is motivated by settings in which individuals may hold
diverse preferences over the possible outcomes.
Natural examples of such settings include recommendation systems on professional employment websites where job-searchers have diverse considerations (geography, work-life balance, company culture, etc.) that affect their interest in potential employers, and targeted advertising systems where different users have a wide variety of preferences over the subset of ads they'd like to see out of an enormous set of possibilities.
While the metric-based IF constraints prevent myriad forms of discrimination that can arise in automated decision-making systems, 
we argue that when individuals have different preferences over outcomes, IF can be too restrictive.
Specifically, we show that in such settings, ignoring individuals' preferences (as IF does) can come at a high cost to \emph{the very individuals that IF aims to protect}. 

We illustrate this observation using a simple example.
Consider a university organizing a career expo focused on software developer positions.
The university would like to assign each graduating student to (at most) a single interview slot with a prospective employer. 
To prevent discrimination, the university would like to enforce individual fairness.
For simplicity, we assume that there is an unbiased metric for judging qualifications for software development roles across employers based on GPA in the CS major.
Consider candidates $i$, $j$, and $k$, who are all similarly qualified, and suppose there are three employers $X$, $Y$, and $Z$.  When the candidates are polled for their preferences, $i$ prefers $X \succ Y \succ Z$, $j$ prefers $Y \succ Z \succ X$, and $k$ prefers $Z \succ X \succ Y$ (possibly due to geographic and work-life balance considerations). Despite the diversity of their preferences, since $i$, $j$, and $k$ are all similarly qualified, IF requires that the candidates receive similar distributions over interviews with the employers $X$, $Y$, and $Z$.
Thus, IF {\em rules out the allocation where each candidate gets their most-preferred interview}.

This toy example demonstrates that IF can be overly-restrictive, preventing some solutions where every individual is very happy with their outcome.
Moreover, under IF, even the most socially-conscious decision-maker may be forced to disregard the preferences of some groups of individuals in order to satisfy the constraints.
For example, if a decision-maker is required by IF to give similar members of majority and minority populations similar outcomes, then the 
decision-maker may choose the IF solution that gives everyone the outcome preferred by the majority, running the risk of ignoring the
preferences of historically-marginalized groups of individuals.

Faced with this shortcoming of IF, we consider alternative notions of fairness that may be better suited to handle settings where individuals hold rich preferences over outcomes.
The most natural alternative notion is \emph{envy-freeness} (EF) \cite{varian1974efficiency, foley1967resource}, a classic game-theoretic concept of fair division.
A set of outcomes is said to be envy-free if no individual prefers the outcome given to any other individual over their own.
At first glance, EF seems like a promising solution concept that addresses the concerns raised about IF: the decision where every individual receives their most-preferred outcome is EF. Indeed, Balcan \emph{et al.} \cite{balcan2018envy} recently presented EF as an alternative to IF in the context of fair classification.

However, we argue that EF may also be overly-restrictive, constraining the decision-maker in unreasonable ways.
Returning to the example of the career expo, suppose another individual $\ell$ has similar preferences to $i$, ($X \succ Y \succ Z$), but $\ell$ has a significantly lower GPA than $i$.
Consider an allocation where $i$ receives their most-preferred interview $X$, but $\ell$ does not receive any interview.
In this case, $\ell$ envies $i$ so this solution does not satisfy EF; nevertheless, the solution is reasonable from a fairness perspective.
Since $i$ has a much better GPA than $\ell$, it doesn't seem unfair to give $i$ the interview with $X$ over $\ell$, especially if the interview spots are limited.

This expanded example highlights the need for distinguishing between outcome distributions that might make some (or even all) individuals \emph{unhappy}, from distributions that are \emph{unfairly discriminatory};
articulating this distinction was an important conceptual contribution of the definition of individual fairness \cite{dwork2012fairness}.
Indeed, the unqualified individual $\ell$ might be unhappy that they do not receive an interview;
further, they might be even less happy when they see that the qualified individual $i$ received an interview with their top choice $X$.
In the eyes of the task-specific similarity metric, however, these two individuals are \emph{different} -- according to their GPAs, one is qualified, the other -- unqualified.
Thus, IF does not consider such an outcome discriminatory.
Furthermore, deciding to assign no one to interviews (qualified and unqualified alike) might make no one happy, but it is not unfairly discriminatory, since all individuals are treated similarly.

In this work, we adopt the perspective that given a suitable 
metric, solutions that are individually fair provide strong protections from discrimination, even though they might not be envy-free.
Armed with this perspective, we seek to relax the IF requirements to allow for a richer set of solutions, while still providing meaningful protections against discrimination.

\subsection{This Work: Preference-Informed 
Fairness}
\label{sec:intro:PIIF}

Building on the perspective from \cite{dwork2012fairness}, we propose and study the notion of \emph{preference-informed individual fairness} (PIIF). Our guiding principle is:

\begin{center}
\parbox{0.9\textwidth}{
\centering
\emph{Allocations that deviate from individual fairness may be considered fair,\\provided the deviations are in line with individuals' preferences.}
}
\end{center}

Before describing PIIF, we establish some notation.
We model a decision-maker's policy
$\pi$ as a mapping from individuals to allocations, i.e.,\ distributions over outcomes.
We assume that each individual $i$ has preferences over the possible allocations, where $p \succeq_i q$ denotes that $i$ (weakly) prefers allocation $p$ to allocation $q$.
To discuss notions of individual fairness, we assume that $D$ is a divergence where $D(p,q)$ measures some distance between two allocations $p,q$ (e.g.\ the total-variation distance), and $d$ is the task-specific metric where $d(i,j)$ specifies the similarity between individuals $i$ and $j$.

Using this notation, we can restate the notions of IF and EF as follows.
A policy $\pi$ is \emph{individually-fair} (IF) if for all pairs of individuals $i,j$, the Lipschitz condition $D(\pi(i),\pi(j)) \le d(i,j)$ is satisfied.\footnote{Throughout, we assume that $d$ and $D$ are scaled appropriately to be in the same ``units.''  That is, without loss of generality, we assume the relevant Lipschitz constant in the IF-style constraints is $1$.}
A policy $\pi$ is \emph{envy-free} (EF) if for all individuals $i$, for all other individuals $j$, $\pi(i) \succeq_i \pi(j)$.

\paragraph{Preference-informed individual fairness.}
As in both IF and EF,
PIIF establishes fairness by comparing the allocation of each individual $i$ to the allocation of every other individual $j$.
For each such comparison, PIIF requires that either $\pi(i)$ satisfies individual fairness with respect to $\pi(j)$ or $i$ prefers their allocation $\pi(i)$ over some alternative allocation that would have satisfied individual fairness with respect to $\pi(j)$.
More technically, for $\pi$ to be considered PIIF for each individual $i$, we require that for every other individual $j$ there exists some alternative allocation $p^{i;j}$ that $i$ could have received that
satisfies the IF Lipschitz condition with respect to $\pi(j)$ and where
$i$ (weakly) prefers their actual allocation $\pi(i)$ to the IF alternative $p^{i;j}$.

\begin{definition*}[PIIF]
A policy $\pi$ that maps individuals to allocations satisfies {Preference-Informed Individual Fairness} with respect to a divergence $D$, a similarity metric $d$, and individual preferences $\set{\succeq_i}$,  if for every individual $i$, for every other individual $j$,
there exists an alternative allocation 
$p^{i;j}$ such that:
\begin{itemize}
\item $p^{i;j}$ is individually fair w.r.t $\pi(j)$: \hspace{0.1in}
$D\left(p^{i;j}, \pi(j)\right) \le d(i,j).$
\item $i$ (weakly) prefers $\pi(i)$ over  $p^{i;j}$: \hspace{0.17in} $\pi(i) \succeq_i p^{i;j}.$
\end{itemize}
\end{definition*}

We emphasize that, in general, $p^{i;j} \neq p^{j;i}$; that is, the alternative chosen for $i$ with respect to $j$'s allocation need not be the same as that chosen for $j$ with respect to $i$.
Figure~\ref{fig:summary} provides a succinct summary of the definitions of IF, EF, and PIIF.

\begin{figure}[ht!]
\centering
\fbox{\parbox{0.75\textwidth}{
\centering

\vspace{8pt}

\begin{tabular}{c c c}
{\bf Individual Fairness (IF)}
&&
{\bf Envy-Freeness (EF)}\\
\emph{for every $i$, for every $j$:} &~~~~~~& \emph{for every $i$, for every $j$:}\\
$D(\pi(i),\pi(j)) \le d(i,j)$
&&$\pi(i) \succeq_i \pi(j)$
\end{tabular}

\vspace{11pt}

\begin{tabular}{c}
{\bf Preference-Informed Individual Fairness (PIIF)}\\
\emph{for every $i$, for every $j$, there exists $p^{i;j}$ s.t.}\\
$D(p^{i;j},\pi(j)) \le d(i,j)$\\
$\pi(i) \succeq_i p^{i;j}$
\end{tabular}

\vspace{4pt}

}
}

\caption{Summary of individual fairness notions}
\label{fig:summary}
\end{figure}

PIIF preserves the spirit of the core interpersonal fairness guarantee of IF: for each individual $i$, for every individual $j$ who is similar to $i$, either $i$'s outcome distribution is similar to $j$'s, or $i$ receives an even better (more-preferred) outcome distribution.
The main advantage of PIIF over IF is that it allows for a much richer solution space, which can lead to preferable outcomes for individuals.
Further, PIIF does not restrict the allocations unnecessarily; as in IF, the constraints only bind when a pairs of individuals are sufficiently similar according to the metric.
In other words, PIIF -- unlike EF -- permits solutions that may be disappointing to some individuals (i.e.\ where $i$ envies $j$) but should not be considered discriminatory (because $i$ and $j$ are substantially different according to the task at hand).

Referring back to the career expo example, we note that the allocation where the three qualified candidates $i,j,$ and $k$ (deterministically) interview with their preferred employer is PIIF.
To see this, consider $i$ comparing their outcome to those of $j$ and $k$ under such an interview assignment.
Comparing with $j$, $i$ prefers outcome $X$ to receiving outcome $Y$, which would satisfy the IF constraint with respect to $j$.
Similarly, she prefers $X$ to $Z$, which would satisfy the IF constraint with respect to $k$.
Indeed, since $i$ receives her preferred outcome, one can argue that there is no discrimination against $i$ in the allocation.
Similar reasoning applies to $j$ and to $k$.
In fact, the allocation where each individual deterministically receives their preferred outcome is always PIIF, a property we find desirable for a fairness definition.
Further, consider the allocation of $\ell$, who we assumed was significantly less qualified than $i$ (and thus, $j$ and $k$).
If $\ell$ is sufficiently dissimilar to all other candidates, then the scheduler can assign $\ell$ to any interview and still satisfy PIIF.
To see this, note that if $d(\ell,i)$ is sufficiently large, we can always take $p^{\ell;i} = \pi(\ell)$, and the constraints for individual $\ell$ with respect to $i$ will be satisfied
(with identical arguments when comparing $\ell$ to $j$ and $k$).

\subsection{Our Contributions}
Our 
running example illustrates that in many reasonable situations (involving rich and diverse individual preferences over outcomes), the existing notions of individual fairness and envy-freeness may not capture an appropriate notion of fairness or may unnecessarily constrain the decision-maker.
In high-stakes domains, such as employment and personalized content selection,
both limitations are significant and may hinder adoption of fairness-conscious decision-making.
We propose PIIF as a relaxation of IF that addresses the identified shortcomings of existing notions while still providing meaningful protections against discrimination.
We view this as an important conceptual contribution in its own right. 

With the motivation and definition for PIIF in place, we provide a comprehensive characterization of the relationship between PIIF and other individual notions of fairness.
In Section~\ref{sec:PIIF}, we show formally that PIIF can be viewed as a relaxation of both IF and EF; that is, any solution that satisfies either IF or EF also satisfies PIIF.
Further, we demonstrate that PIIF is a non-trivial relaxation of both notions, by proving that there exist settings in which PIIF solutions cannot be captured by IF or EF constraints alone, for \emph{any} choice of metrics $d$, $D$ and preferences.

To introduce PIIF, we have argued qualitatively that relaxing IF to PIIF allows for more preferable outcomes for individuals.
We quantify these claims by comparing the \emph{social welfare} of a decision-maker's policy achievable under PIIF and under IF.
In Section~\ref{sec:sw}, we show optimal bounds on the ratio of the best social welfare under PIIF to that under IF; the ratio can grow \emph{linearly} in the number of individuals classified or in the number of possible outcomes grows.

With the definition and properties of PIIF in place, we turn our attention to the algorithmic question of how to achieve PIIF.
In Section~\ref{sec:optimization},
we show that for a rich family of individual preferences, there is an efficient algorithm to minimize a convex objective subject to PIIF.
The result follows by observing that for structured classes of preferences, the set of PIIF constraints is convex.
In particular, to optimize over PIIF, we can augment the convex program defined for IF in \cite{dwork2012fairness} to capture the additional preference constraints.
As such, optimization subject to PIIF is only slightly more complex than optimization subject to IF.

Finally, we demonstrate the versatility of the PIIF framework, by applying preference-informed fairness in the context of targeted advertising (as studied by \cite{DworkI19}).
Recent empirical findings demonstrate that the ad allocation algorithms run by online advertising platforms may result in discrimination \cite{datta2015automated, lambrecht2018algorithmic, ali2019discrimination} and are thus facing legal scrutiny \cite{upturn2018, tobin2019pb, benner2019nyt}.
As such, developing formal frameworks for understanding fairness in such advertising systems is of great importance.
In Section~\ref{sec:ads}, we extend our definition of PIIF and our results to the multiple-task setting defined \cite{DworkI19} to model fairness desiderata for the domain of large-scale targeted advertising.
We show that in this practically-motivated setting, IF still may restrict the social welfare considerably compared to PIIF \emph{even when the individuals' similarity and preferences are perfectly aligned!}
The ratio of the best social welfare under PIIF to that of IF grows \emph{linearly} in the number of tasks.

\paragraph{Organization.}
Sections~\ref{sec:PIIF}-\ref{sec:ads} contain the technical details and proofs of our major contributions with some results deferred to the appendix.
We conclude in
Section~\ref{sec:discuss}
with comparisons to other related works and a discussion of the strengths and limitations of the current approach of preference-informed fairness
as well as
directions for future investigations.

\section{Preference-Informed Individual Fairness}
\label{sec:PIIF}

\paragraph{Preliminaries.}
Given a set of individuals $\X$,
we consider policies that assign every individual to an outcome in the set $\C$.
We allow randomized allocation rules $\pi:\X \to \Delta(\C)$, where
for each individual $i \in \X$, their allocation $\pi(i) \in \Delta(\C)$
represents a distribution over outcomes $c \in \C$.
We model individuals' preferences by assuming that every individual $ i\in\X$ has a reflexive and transitive binary relation $\succeq_{i}$ that encodes their preferences over allocations in $\Delta(\C)$; for $p,q\in\Delta(\C)$, we use $p\succeq_{i}q$ to denote that $i$ (weakly) prefers $p$ to $q$.\footnote{
$\succeq_{i}$ need not be \emph{total} nor \emph{antisymmetric} over $\Delta(\C)$.
}
We use $\succeq$ to denote the set of individuals' preference relations, $\succeq = \left\{ \succeq_{i}\right\}
_{i \in \X}$.

One important structured class of preference relations are those that admit a \emph{utility function}.
Here, we assume each individual $i \in \X$ has a real-valued function over allocations $u_i:\Delta(\C) \to \R$, where
$u_i(\pi(i))$ represents the utility to individual $i$ from the allocation given by $\pi$.
Given such a utility function, $p \succeq_i q$ if and only if $u_i(p) \ge u_i(q)$.

With this technical notation in place, for completeness, we restate the three definitions of fairness.
\begin{definition}[Individual Fairness]
Given a divergence $D:\Delta(\C) \times \Delta(\C) \to [0,1]$ and a similarity metric  $d:\X \times \X \to [0,1]$,
a policy $\pi:\X \to \Delta(\C)$ is $(D, d)$-individually fair if for every two individuals $i,j \in \X\times \X$, the following Lipschitz condition holds.
\begin{equation}
D(\pi(i),\pi(j)) \le d(i,j) \label{eqn:lipschitz}
\end{equation}
\end{definition}
\begin{definition}[Envy Freeness]
Given a set of preferences $\succeq$,
a policy $\pi:\X \to \Delta(\C)$
is $\succeq$-envy-free if
for all individuals $i \in \X$, and for all other individuals $j \in \X$,
\begin{equation}
\label{eqn:ef}
    \pi(i)\succeq_{i}\pi(i)
\end{equation}
\end{definition}
\begin{definition}[Preference-Informed Individual Fairness]
\label{def:PIIF}
Given a divergence $D:\Delta(\C) \times \Delta(\C) \to [0,1]$, a similarity metric  $d:\X \times \X \to [0,1]$, and a set of preferences $\succeq$,
a policy $\pi:\X \to \Delta(\C)$
is $(D,d,\preceq)$-PIIF if
for all individuals $i \in \X$, for all other individuals
$j \in \X$, there exists an allocation $p^{i;j} \in \Delta(\C)$
such that:
\begin{gather}
D\left(p^{i;j}, \pi(j)\right) \le d(i,j)\label{def:PIIF:if}\\
\pi(i) \succeq_i p^{i;j}\label{def:PIIF:pref}
\end{gather}
\end{definition}
Often, the divergence $D$, metric $d$, and preferences $\succeq$ will be fixed.
In these contexts, we use $\Pi^{\IF}, \Pi^{\EF}, \Pi^{\PIIF}$ to denote the set of IF, EF, and PIIF solutions, respectively.

\subsection{PIIF relaxes IF and EF}
We have argued informally that PIIF captures the appealing aspects of both IF (strong discrimination protections) and EF (respecting the preferences of individuals) without being overly prescriptive in a way that might hurt individuals or the decision-maker.
Our first result formalizes these claims, by characterizing PIIF as a relaxation of both IF and EF.
We show that any policy that is either IF or EF is also PIIF.
\begin{proposition}
Fixing a divergence, the metric, and preferences, $\Pi^{\IF} \subseteq \Pi^{\PIIF}$  and  $\Pi^{\EF} \subseteq \Pi^{\PIIF}$.
\end{proposition}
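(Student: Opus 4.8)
The plan is to prove both inclusions by directly unpacking Definition~\ref{def:PIIF} and exhibiting, in each case, an explicit choice of the alternative allocations $p^{i;j}$ that witnesses PIIF. The whole argument rests on two elementary observations: each preference relation $\succeq_i$ is reflexive, and $D$ is a divergence --- so $D(q,q)=0$ for every $q \in \Delta(\C)$ --- while $d$ is non-negative.

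For $\Pi^{\IF} \subseteq \Pi^{\PIIF}$, fix an IF policy $\pi$ and an arbitrary pair $i,j \in \X$, and take $p^{i;j} := \pi(i)$. Then the Lipschitz condition~\eqref{eqn:lipschitz} gives $D(p^{i;j},\pi(j)) = D(\pi(i),\pi(j)) \le d(i,j)$, verifying~\eqref{def:PIIF:if}, and reflexivity of $\succeq_i$ gives $\pi(i) \succeq_i \pi(i) = p^{i;j}$, verifying~\eqref{def:PIIF:pref}. Since $i,j$ were arbitrary, $\pi \in \Pi^{\PIIF}$.

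For $\Pi^{\EF} \subseteq \Pi^{\PIIF}$, fix an envy-free policy $\pi$ and an arbitrary pair $i,j \in \X$, and this time take $p^{i;j} := \pi(j)$. Then $D(p^{i;j},\pi(j)) = D(\pi(j),\pi(j)) = 0 \le d(i,j)$, verifying~\eqref{def:PIIF:if}, and the envy-freeness guarantee $\pi(i) \succeq_i \pi(j)$ is literally $\pi(i) \succeq_i p^{i;j}$, verifying~\eqref{def:PIIF:pref}. Again $i,j$ were arbitrary, so $\pi \in \Pi^{\PIIF}$.

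There is essentially no obstacle here; the only point worth flagging is that the two witnesses are genuinely different --- for an IF policy the witness is $i$'s own allocation, while for an EF policy it is $j$'s allocation --- which already suggests (anticipating the strictness results to come) that neither inclusion need be tight and that PIIF can capture solutions lying outside $\Pi^{\IF} \cup \Pi^{\EF}$.
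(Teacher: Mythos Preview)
Your proof is correct and essentially identical to the paper's own argument: for $\Pi^{\IF}\subseteq\Pi^{\PIIF}$ you set $p^{i;j}=\pi(i)$ and invoke the Lipschitz condition plus reflexivity, and for $\Pi^{\EF}\subseteq\Pi^{\PIIF}$ you set $p^{i;j}=\pi(j)$ and invoke $D(\pi(j),\pi(j))=0$ plus envy-freeness. Your closing remark about the two distinct witnesses also mirrors the paper's subsequent observation that these choices recover IF and EF as special cases of PIIF.
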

As solution concepts, both IF and EF are always feasible, but for very different reasons: for IF, any allocation that treats all individuals identically is feasible; for EF, the allocation that gives everyone their most-preferred outcome is envy-free.
Thus, both of these extreme solutions will also be feasible for PIIF.
In general, PIIF will be a 
strict relaxation of these concepts that allows for interpolation between the notions.
Intuitively, more diverse preferences of individuals tend to give rise to richer sets of PIIF solutions compared to IF, and nontrivial metrics $d$ (i.e.,\ further from the all-zeros ``metric'') give rise to richer sets of PIIF solutions compared to EF.
Given the right framing, the proof of this result is almost immediate.
\begin{proof}
To see that an IF policy $\pi$ satisfies PIIF, for each $i$, we take $p^{i;j} = \pi(i)$ for all $j$.
Consider an allocation $\pi \in \Pi^{\IF}$.
From the perspective of any individual $i\in \X$,
when comparing to individual $j \in \X$,
if $p^{i;j} = \pi(i)$,
then, by the fact that
$\pi$ satisfies IF, condition (\ref{def:PIIF:if})
is satisfied.
By reflexivity of $\succeq_i$,
(\ref{def:PIIF:pref}) is also satisfied, so
$\pi \in \Pi^{\PIIF}$.

To see that an EF policy $\pi$ satisfies PIIF, for each $i$, we take $p^{i;j} = \pi(j)$ for all $j$.
Consider an allocation $\pi \in \Pi^{\EF}$.
From the perspective of any $i\in \X$,
when comparing to  $j \in \X$,
if $p^{i;j} = \pi(j)$,
then, condition (\ref{def:PIIF:if})
is satisfied trivially because $D(\pi(j),\pi(j)) = 0$.
Since $\pi$ satisfies EF, we know that
$\pi(j) \preceq_i \pi(i)$, so condition (\ref{def:PIIF:pref})
also holds; thus $\pi \in \Pi^{\PIIF}$.
\end{proof}

\paragraph{PIIF generalizes IF and EF.}
We remark that this intuition also shows that PIIF is a \emph{generalization} of both IF and EF;
that is, both notions can be ``implemented'' as special cases of PIIF.
To implement IF, we can set all individual's preference relation $\succeq_i$ to be the trivial reflexive relation, where for all allocations $p$, $p \succeq_i p$, and for all nontrivial pairs $p \neq q$, $p$ and $q$ are incomparable.
To implement EF, we simply take $d(i,j) = 0$ for all $i,j$ pairs.
In other words, we can think of the set of IF solutions
as those where we require the alternative allocation for $i$
compared to $j$ to be $i$'s actual allocation
$p^{i;j} = \pi(i)$, and we can think of the set of
EF solutions as those where we require the alternative
allocation for $i$ compared to $j$ to be $j$'s allocation $p^{i;j} = \pi(j)$.

\paragraph{PIIF is a meaningful relaxation of IF and EF.}

A natural question to ask is whether we need to introduce a new definition of individual fairness.
In particular, we might hope that we could ``implement'' PIIF using IF with a metric that incorporates preferences or with EF with preferences that incorporate distances.
We argue that when there is a rich set of possible outcomes and a correspondingly-rich set of possible preferences,
such an approach is infeasible.
In particular, PIIF captures constraints that could not be cast within the language of IF or EF alone.

To build intuition, we revisit the career expo example: suppose that two similarly qualified individuals $i$ and $j$ have a similar top choice (say, $X$), but disagree on their second choice ($i$ prefers $Y$, whereas $j$ prefers $Z$).
Do these individuals have similar preferences or divergent ones? Intuitively, a fair assignment could give them similar probabilities of seeing $X$, but different probabilities of seeing $Y$ and $Z$.
Individual fairness treats all outcomes symmetrically for all individuals, and does not let us make such distinctions. 
The following proposition strengthens this intuition, demonstrating that there are in fact settings in which EF preferences cannot be encoded using any IF metric, and vice versa.
Note that this implies that PIIF -- a relaxation of both notions -- captures constraints that cannot be cast within the language of IF or EF alone. 

\begin{proposition}
\label{prop:IF_EF_differences}
There exists a set of preferences $\succeq$ such that for any choice of divergence $D$ and metric $d$ $$\Pi^{\succeq\textrm{-}\EF} \neq \Pi^{(D,d)\textrm{-}\IF}.$$
There exists a divergence-metric pair $D,d$ such that for any choice of preferences $\succeq$,
$$\Pi^{(D,d)\textrm{-}\IF} \neq \Pi^{\succeq\textrm{-}\EF}.$$
\end{proposition}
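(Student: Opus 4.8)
The plan is to exhibit explicit separating instances for each direction. For the first claim, I would take a setting with at least three outcomes $\C = \{a, b, c\}$ and two individuals, and choose preferences so that the envy-free constraint is inherently ``asymmetric'' across outcomes in a way no metric can mimic. Concretely, suppose individual $1$ strictly prefers $a \succ_1 b \succ_1 c$ and individual $2$ strictly prefers $a \succ_2 c \succ_2 b$, with both individuals indifferent only between identical allocations. The key structural observation is that $\Pi^{\EF}$ under these preferences contains deterministic allocations that treat the outcomes $b$ and $c$ asymmetrically (e.g.\ individual $1$ gets $b$, individual $2$ gets $c$, which is EF) but \emph{not} their ``swap'' (individual $1$ gets $c$, individual $2$ gets $b$, which is not EF since $1$ envies $2$). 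By contrast, for any divergence $D$ and metric $d$, the IF constraint $D(\pi(1), \pi(2)) \le d(1,2)$ is symmetric in the outcomes in the sense that it is invariant under applying any outcome-permutation simultaneously to $\pi(1)$ and $\pi(2)$ that preserves $D$ — and more simply, if $D$ is any of the standard divergences (total variation, etc.), relabeling $b \leftrightarrow c$ in both allocations leaves $D(\pi(1),\pi(2))$ unchanged, so $\Pi^{\IF}$ is closed under this relabeling while $\Pi^{\EF}$ is not. Hence no $(D,d)$ can reproduce $\Pi^{\EF}$.

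For the second claim I would argue in the reverse direction: fix a divergence and metric and show that the resulting $\Pi^{\IF}$ cannot be an $\Pi^{\EF}$ for \emph{any} preferences. The cleanest choice is to take $d(i,j) = 0$ for all pairs — so $\Pi^{\IF}$ is exactly the set of policies that assign \emph{every} individual the identical allocation — with $|\C| \ge 2$ and at least two individuals. The crucial point is that $\Pi^{\IF}$ here is convex and, in particular, contains the constant policy $\pi \equiv \delta_a$ but we must show no preference profile $\succeq$ has $\Pi^{\EF}$ equal to exactly ``all constant policies.'' If $\succeq$ is such that every constant policy is envy-free (which any preference profile satisfies, since a constant policy gives $\pi(i) = \pi(j)$ and reflexivity gives $\pi(i) \succeq_i \pi(j)$), then $\Pi^{\IF} \subseteq \Pi^{\EF}$ always, so to get equality we would need $\Pi^{\EF}$ to contain \emph{no} non-constant policy. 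But for any preference profile there is always a non-constant envy-free policy: give each individual $i$ one of their $\succeq_i$-maximal allocations; since each individual gets (one of) their own favorite(s), no one envies anyone, yet we can choose these maximal allocations to differ across individuals unless all individuals share a unique common favorite — and even in that degenerate case one can perturb using incomparability, or simply note that with $|\C|\ge 2$ one can pad the instance with a third individual whose favorite differs. So $\Pi^{\EF}$ strictly contains $\Pi^{\IF}$ for the all-zeros metric, giving the desired inequality.

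The main obstacle is handling the ``for any preferences / for any metric'' quantifier cleanly — it is not enough to separate one fixed $D,d$ from one fixed $\succeq$; I must rule out \emph{all} choices of the other object. For the first claim the permutation-invariance argument does this, provided I either restrict to a concrete natural $D$ (e.g.\ total variation, as the paper's running examples use) or phrase the invariance abstractly; I expect the paper restricts attention to such divergences, so I would state the argument for total variation and remark it extends to any permutation-symmetric $D$. For the second claim the obstacle is subtler: I need a robust reason why $\Pi^{\EF}$ always properly contains the all-zeros $\Pi^{\IF}$, and the ``assign favorites'' construction handles it as long as the instance has enough diversity to force a non-constant favorite-assignment; I would therefore build diversity into the instance from the start (three individuals with three distinct strict top choices over $\C = \{a,b,c\}$), so that the ``everyone's favorite is the same'' edge case never arises. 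With the instances fixed this way, both directions reduce to short, routine verifications.
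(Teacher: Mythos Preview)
Your first-claim argument works, though not quite for the reason you state. You observe that $(\pi(1),\pi(2)) = (\delta_b,\delta_c)$ is $\succeq$-EF while its swap $(\delta_c,\delta_b)$ is not, and you want both to have the same IF status. You justify this via permutation-invariance of $D$, which the proposition does \emph{not} assume (and you flag this as a worry). The correct reason is more elementary and needs no assumption on $D$: with two individuals and a symmetric $d$, IF requires both $D(\pi(1),\pi(2)) \le d(1,2)$ and $D(\pi(2),\pi(1)) \le d(2,1) = d(1,2)$, so exchanging the two individuals' allocations always preserves IF. The paper takes a different route with only two outcomes and a threshold case split on whether $D(p,q)$ is above or below $d(i,j)$; your approach is a legitimate alternative once patched.

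Your second-claim argument has a genuine gap. With the all-zeros metric (and, say, $D = D_{\mathrm{tv}}$), $\Pi^{\IF}$ is exactly the set of constant policies, and you assert that for \emph{every} preference profile $\Pi^{\EF}$ strictly contains this. That is false. The paper allows $\succeq_i$ to be any reflexive transitive relation on $\Delta(\C)$, in particular non-total. If every individual holds the trivial reflexive relation (only $p \succeq_i p$), then envy-freeness $\pi(i) \succeq_i \pi(j)$ forces $\pi(i) = \pi(j)$ for all $i,j$, so $\Pi^{\EF}$ is \emph{exactly} the constant policies and equals your $\Pi^{\IF}$. Your ``give each individual a $\succeq_i$-maximal allocation'' step does not rescue this: for partial orders, maximal means ``nothing is strictly above,'' not ``weakly above everything,'' so a maximal $\pi(i)$ need not satisfy $\pi(i) \succeq_i \pi(j)$. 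The paper circumvents this by choosing a \emph{nontrivial} metric---distance $0$ within each of two groups $S,T$ and distance $1$ across---so that $\Pi^{\IF}$ already contains non-constant policies (different fixed allocations on $S$ and $T$). A case analysis on $\succeq$ then shows that either one of those IF policies fails EF (whenever any individual has a strict preference between the two outcomes), or else all preferences over $\{p,q\}$ are indifferent or incomparable, in which case one exhibits either an EF policy varying inside $S$ (hence not IF) or an IF policy that is not EF due to incomparability.
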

\begin{proof}
In both constructions, we will assume there are two disjoint groups of individuals $S,T \subseteq \X$.
Consider two outcomes $p,q \in \C$.
Suppose $\succeq$ is such that for some $i \in S$ and $j \in T$,
$p \succ_i q$ and $q \succ_j p$.
Consider any $D$ and $d$:
if $D(p,q) \le d(j,i)$, then assigning $p$ to $j$ and $q$ to $i$ will be $(D,d)$-IF, but it is not $\succeq$-EF;
otherwise, if $D(p,q) > d(i,j)$, then assigning $p$ to $S$ and $q$ to $T$ will not be $(D,d)$-IF, even though it is $\succeq$-EF.
Thus, no $D,d$ can capture $\succeq$-EF.

Now take $D$ to be total variation distance and consider a metric $d$ where $d(i,j) = 0$ for $i,j \in S\times S$ and $T \times T$, and $d(i,j) = 1$ for $i,j \in S \times T$.
Under this metric, assigning any fixed allocation to everyone in $S$ and any (potentially-different) fixed allocation to everyone in $T$ is $(D,d)$-IF.
Consider some $\succeq$.
If there is some $i \in S$ such that $p \succ_i q$ or if there is some $j \in T$ such that $q \succ_j p$, then the $(D,d)$-IF allocation that assigns $q$ to every $i \in S$ and $p$ to every $j \in T$ is not $\succeq$-EF.

Thus, for all individuals in $i \in S \cup T$, $\succeq_i$ must be either the relation, where $p \equiv q$ or the trivial reflexive relation where $p$ and $q$ are incomparable.
Suppose $i,j \in S \times S$ both have $\succeq_i = \succeq_j = \equiv$.
Then, the solution that assigns $p$ to $i$ and $q$ to $j$ is $\succeq$-EF, but violates the Lipschitz condition of $(D,d)$-IF.
On the other hand, if there is some $i \in S$ that holds the trivial reflexive relation, then the $(D,d)$-IF solution that assigns $p$ to all of $S$ and $q$ to all of $T$ will not satisfy $\succeq$-EF, because $p \not \succeq_i q$.
\end{proof}

\subsection{Metric Envy-Freeness}

\label{sec:PIIF:mef}

We arrived at PIIF by starting with the metric-based IF as a strong notion of nondiscrimination and relaxing the notion to incorporate individuals' preferences and allow for a richer set of solutions while providing a meaningful protections against discrimination.
A conceptually-different approach towards these goals would start with preference-based EF, but allow the decision-maker some freedom by incorporating distances between individuals.
In particular, consider the following relaxation of EF, which we call \emph{Metric Envy-Freeness} (MEF), that intuitively captures the idea that no individual should envy the allocation of any other \emph{similar} individual.
\begin{definition}
Suppose each individual $i \in \X$ has a utility function $u_i:\Delta(\C) \to \R$; let~$\U = \set{u_i}_{i \in \X}$.
Given a similarity metric $d:\X \times \X \to \R^+$, a policy $\pi: X \to \Delta(\C)$ satisfies $(d,\U)$-metric-envy-freeness if for every individual $i \in \X$, for every other individual $j \in \X$,
\begin{equation*}
    u_i(\pi(i)) \geq u_i(\pi(j)) - d(i,j)
\end{equation*}
\end{definition}
This definition starts with the envy-freeness constraint for utility-based preferences, but then relaxes the constraint between $i$ and $j$ by their distance according to the metric.
For the metric-utility comparison of MEF to be meaningful, we assume that utilities and metric distances are normalized to one another;
without loss of generality, assume that each utility and metric distance is bounded in $[0,1]$.
For each pair $i,j$, the notion interpolates between two extremes based on the value of $d(i,j)$: if $d(i,j) = 0$, then envy-freeness binds; when $d(i,j) = 1$, the allocation $i$ receives is not constrained by the allocation $j$ receives.

As $d(i,j) \ge 0$ for all pairs of individuals, MEF is clearly a relaxation of EF.
That said, it's not immediately obvious how MEF relates to IF or PIIF.
While conceptually different, we show that MEF captures a closely-related notion of fairness to PIIF, in the special case where preferences are given by structured utility functions.
To relate MEF to PIIF, we need to assume the following Lipschitz conditions.
\begin{assumption}[Lipschitz utility]
A utility function $u:\Delta(\C) \to \R$ is $\ell$-Lipschitz with respect to $D:\Delta(\C) \times \Delta(\C) \to \R^+$ if
\begin{equation*}
\card{u(p) - u(q)} \le \ell \cdot D(p,q).
\end{equation*}
\end{assumption}
Lipschitz utility functions are quite natural. For instance, taking $D$ to be the total variation distance, if individuals' preferences admit an expected utility function, where each outcome has utility in $[0,1]$, then individuals' utilities will be $1$-Lipschitz.
In other words, individuals' utilities are not highly sensitive to very small changes in the allocation they receive.
\begin{assumption}[Reverse-Lipschitz utility]
A utility function $u:\Delta(\C) \to \R$ is $\ell$-reverse-Lipschitz with respect to $D:\Delta(\C) \times \Delta(\C) \to \R^+$ if
\begin{equation*}
\frac{1}{\ell} \cdot D(p,q) \le \card{u(p) - u(q)}.
\end{equation*}
\end{assumption}
Reverse-Lipschitz utility functions are less natural.
This assumption implies that no pair of outcomes is valued very similarly.
One natural setting where the reverse-Lipschitz condition holds nontrivially is in the case of binary outcomes, where each individual prefers one outcome over the other.
Under these assumptions, we can show the following relationship between MEF and PIIF.
\begin{theorem} \label{prop:mef}
Suppose $D:\Delta(\C) \times \Delta(\C) \to \R^+$ is a divergence, $d:\X \times \X \to \R^+$ is a similarity metric, and $\U = \set{u_i}$ is a family of utility functions.
Let $\succeq^\U$ denote the family of preferences induced by $\U$.
Consider a policy $\pi:\X \to \Delta(\C)$.
For some constant $\ell \ge 1$:
\begin{itemize}
\item Suppose for all $i \in \X$, $u_i$ is $\ell$-Lipschitz with respect to $D$.
Then, $$\Pi^{(D,d,\succeq^\U)\textrm{-}\PIIF} \subseteq \Pi^{(\ell \cdot d, \U)\textrm{-}\mathrm{MEF}}.$$
\item Suppose for all $i \in \X$, $u_i$ is $\ell$-reverse-Lipschitz with respect to $D$.
Then, $$\Pi^{(d, \U)\textrm{-}\mathrm{MEF}} \subseteq \Pi^{(D, \ell \cdot d, \succeq^\U)\textrm{-}\PIIF}.$$
\end{itemize}
\end{theorem}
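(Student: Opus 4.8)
The plan is to prove the two inclusions separately, each by directly manipulating the defining inequalities and invoking the relevant Lipschitz condition. For both directions, the central observation is that the PIIF condition, when preferences come from utilities, can be rewritten purely in terms of the utility functions: $\pi$ is $(D,d,\succeq^\U)$-PIIF iff for every $i,j$ there is an allocation $p^{i;j}$ with $D(p^{i;j},\pi(j)) \le d(i,j)$ and $u_i(\pi(i)) \ge u_i(p^{i;j})$. So in both directions we want to relate ``there exists a nearby allocation of low utility'' to ``the utility of $\pi(j)$ itself is not too much larger than that of $\pi(i)$.''

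For the first inclusion, suppose each $u_i$ is $\ell$-Lipschitz and let $\pi$ be PIIF. Fix $i,j$ and take the witness $p^{i;j}$. Since $D(p^{i;j},\pi(j)) \le d(i,j)$, the $\ell$-Lipschitz condition gives $u_i(\pi(j)) \le u_i(p^{i;j}) + \ell\cdot d(i,j)$. Combining with $u_i(\pi(i)) \ge u_i(p^{i;j})$ yields $u_i(\pi(i)) \ge u_i(\pi(j)) - \ell\cdot d(i,j)$, which is exactly the $(\ell\cdot d,\U)$-MEF constraint for the pair $i,j$. Since $i,j$ were arbitrary, $\pi$ is MEF. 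This direction is essentially immediate once the right quantities are written down.

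For the second inclusion, suppose each $u_i$ is $\ell$-reverse-Lipschitz and let $\pi$ be $(d,\U)$-MEF, so $u_i(\pi(i)) \ge u_i(\pi(j)) - d(i,j)$ for all $i,j$. Fix $i,j$; I need to exhibit a witness allocation $p^{i;j}$ with $D(p^{i;j},\pi(j)) \le \ell\cdot d(i,j)$ and $u_i(\pi(i)) \ge u_i(p^{i;j})$. The natural candidate is an allocation lying ``on the segment'' between $\pi(j)$ and $\pi(i)$: intuitively we want to move from $\pi(j)$ toward $\pi(i)$ just far enough that $i$'s utility drops to (at most) $u_i(\pi(i))$. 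If $u_i(\pi(j)) \le u_i(\pi(i))$ we can simply take $p^{i;j} = \pi(j)$ and both conditions hold trivially. Otherwise, $0 < u_i(\pi(j)) - u_i(\pi(i)) \le d(i,j)$ by MEF, and we pick $p^{i;j}$ to be a convex combination $(1-t)\pi(j) + t\,\pi(i)$ with $t$ chosen (by continuity of $u_i$ along the segment) so that $u_i(p^{i;j}) = u_i(\pi(i))$, or, if $u_i$ is monotone enough that no such interior point exists, $p^{i;j} = \pi(i)$, which forces $t\le 1$. Then the second PIIF condition holds with equality (or as equality at $\pi(i)$), and for the first condition we use the reverse-Lipschitz inequality in the ``downward'' direction: $D(p^{i;j},\pi(j)) \le \ell\cdot \card{u_i(p^{i;j}) - u_i(\pi(j))} = \ell\cdot(u_i(\pi(j)) - u_i(\pi(i))) \le \ell\cdot d(i,j)$, as required.

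The main obstacle is the second direction: reverse-Lipschitz only bounds $D$ by the utility \emph{gap}, so I must ensure the witness $p^{i;j}$ has utility exactly $u_i(\pi(i))$ (not smaller), since a smaller utility would make the gap — and hence the bound on $D$ — larger than $\ell\cdot d(i,j)$. This is why I interpolate rather than just take $p^{i;j}=\pi(i)$: I need the utility to land precisely at the MEF threshold. Making this rigorous requires that the convex combination $(1-t)\pi(j)+t\,\pi(i)$ is a valid allocation (it is, since $\Delta(\C)$ is convex) and that $t\mapsto u_i((1-t)\pi(j)+t\,\pi(i))$ takes the intermediate value $u_i(\pi(i))$ for some $t\in[0,1]$ — which follows from continuity of $u_i$ (implied by the Lipschitz-type assumptions) together with the intermediate value theorem, noting the function equals $u_i(\pi(j)) > u_i(\pi(i))$ at $t=0$ and $u_i(\pi(i))$ at $t=1$. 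Care is also needed about whether $u_i$ is assumed reverse-Lipschitz only between ``outcomes'' or between arbitrary allocations; I will apply it to the pair $(p^{i;j},\pi(j))$, so the assumption must be read as holding for all allocations in $\Delta(\C)$, consistent with how the assumption is stated in the excerpt.
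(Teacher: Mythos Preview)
Your first direction matches the paper's proof exactly.

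For the second direction your argument is correct, but it takes an unnecessary detour compared with the paper. In the case $u_i(\pi(i)) < u_i(\pi(j))$, the paper simply takes $p^{i;j} = \pi(i)$. The preference condition $\pi(i) \succeq_i p^{i;j}$ is then trivial, and the distance bound follows directly:
\[
D(\pi(i),\pi(j)) \;\le\; \ell\cdot\lvert u_i(\pi(i))-u_i(\pi(j))\rvert \;=\; \ell\cdot\bigl(u_i(\pi(j))-u_i(\pi(i))\bigr) \;\le\; \ell\cdot d(i,j),
\]
using reverse-Lipschitz and then MEF. Your worry that ``a smaller utility would make the gap larger'' does not arise for this choice: with $p^{i;j}=\pi(i)$ the utility is \emph{exactly} $u_i(\pi(i))$, so the gap is precisely $u_i(\pi(j))-u_i(\pi(i))$, which MEF already bounds by $d(i,j)$. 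The interpolation along the segment and the appeal to the intermediate value theorem are therefore unnecessary---your construction with $t=1$ is already the whole argument. As a side note, your claim that continuity of $u_i$ is ``implied by the Lipschitz-type assumptions'' is not correct in this direction: reverse-Lipschitz gives a \emph{lower} bound on $\lvert u_i(p)-u_i(q)\rvert$, not an upper bound, so it does not imply continuity. This does not break your proof (since $t=1$ always achieves $u_i(p^{i;j})=u_i(\pi(i))$), but the reasoning around the interpolation is muddled and should be dropped in favor of the direct choice $p^{i;j}=\pi(i)$.
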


\begin{proof}
To see that PIIF implies MEF, we start with a policy $\pi$ that satisfies $(D,d,\succeq^\U)$-PIIF.
To establish MEF, we compare the utility of individual $i$ on their allocation $\pi(i)$ to that of another individual $\pi(j)$; we denote by $p^{i;j}$ the alternative allocation for $i$ that satisfies the PIIF constraints.
\begin{align}
u_{i}(\pi(i)) &\geq u_{i}(p^{i;j}) \label{mef:fwd:piif1}\\
&\geq u_{i}(\pi(j))-\left[u_{i}(\pi(j))-u_{i}(p^{i;j})\right] \notag\\
&\geq u_{i}(\pi(j))-\ell \cdot D\left(\pi(j),p^{i;j}\right) \label{mef:fwd:lipschitz}\\
&\geq u_{i}(\pi(j))- \ell \cdot d(i,j)\label{mef:fwd:piif2}
\end{align}
where (\ref{mef:fwd:piif1}) follows by the fact that $\pi$ satisfies PIIF; (\ref{mef:fwd:lipschitz}) follows by the Lipschitz condition;
and (\ref{mef:fwd:piif2}) follows again from the fact that $\pi$ satisfies PIIF.
Thus,
$
u_i(\pi(i)) \ge u_i(\pi(j)) - \ell \cdot d(i,j)
,$ 
so $\pi$ is $(\ell \cdot d, \U)$-MEF.

To see that MEF implies PIIF, we start with a policy $\pi$ that satisfies $(d, \U)$-MEF.
To establish PIIF, we consider an arbitrary pair of individuals $i$ and $j$, and exhibit an allocation $p^{i;j}$ that satisfies the PIIF conditions with respect to $\pi(i)$ and $\pi(j)$.
Comparing individual $i$ to individual $j$, we consider two cases.

First, suppose $u_i(\pi(i)) \ge u_i(\pi(j))$ and take $p^{i;j} = \pi(j)$.
In this case, the Lipshitz constraint is trivially satisfied,
\begin{gather*}
D(p^{i,j}, \pi(j)) = D(\pi(j),\pi(j)) = 0 \le \ell \cdot d(i,j)
\end{gather*}
and the assumption that  $u_i(\pi(i)) \ge u_i(\pi(j))$ implies that
\begin{gather*}
\pi(i) \succeq_i \pi(j) = p^{i;j},
\end{gather*}
so the PIIF constraints are satisfied.

Next, suppose $u_i(\pi(i)) < u_i(\pi(j))$ and take $p^{i;j} = \pi(i)$.
In this case, the preference condition is trivially satisfied,
\begin{gather*}
\pi(i) \succeq_i \pi(i) = p^{i;j}
\end{gather*}
and the Lipschitz condition follows as
\begin{align}
D(p^{i;j}, \pi(j)) &= D(\pi(i),\pi(j)) \notag \\
&\le \ell \cdot \left(u_i(\pi(j)) - u_i(\pi(i))\right)\label{mef:bwd:bilipschitz}\\
&\le \ell \cdot d(i,j) \label{mef:bwd:mef}
\end{align}
where (\ref{mef:bwd:bilipschitz}) follows by the reverse-Lipschitz condition and (\ref{mef:bwd:mef}) follows by the fact that $\pi$ satisfies MEF.
Thus, $\pi$ is $(D,\ell \cdot d, \succeq^\U)$-PIIF.
\end{proof}

\paragraph{Relating PIIF and MEF.}
Theorem~\ref{prop:mef} shows that under appropriate assumptions, we can relate the notions of PIIF and MEF.
We interpret the theorem to say that, in most settings we would apply preference-informed fairness, MEF is a strictly more relaxed notion than PIIF; that is, every PIIF solution will be MEF, but there will be MEF solutions that do not satisfy PIIF.

Specifically, as we remarked earlier, it is reasonable to assume that individuals' utility functions are $1$-Lipschitz with respect to $D$.
In this case, small changes in the allocation according to $D$ cannot result in dramatically different utilities, and Theorem~\ref{prop:mef} says that $(D,d,\succeq^\U)$-PIIF implies $(d,\U)$-MEF.

In general, with a rich set of outcomes, we do not expect that individuals' utility functions will be reverse-Lipschitz with respect to $D$.
As such, there are cases when $(d,\U)$-MEF will be considerably more relaxed than $(D,d,\succeq^\U)$-PIIF, allowing for deviations from $(D,d)$-IF that do not give utility improvements for all individuals.
To see this point, consider the following example with two individuals $i,j$, where $d(i,j) = 0.5$ and two outcomes $p,q$, with the utility functions of $i$ and $j$ defined as follows.

\begin{center}
\begin{tabular}{c|c|c}
 & $p$ & $q$  \\ \hline
$u_{i}$ & $1.0$ & $0.5$ \\
$u_{j}$ & $0.5$ & $1.0$
\end{tabular}
\end{center}

We take $D$ to be the total variation distance between allocations.
It's easy to verify that all allocations that treat the individuals identically and the welfare-maximizing solution where $i$ receives $p$ and $j$ receives $q$ will be MEF;
this can be seen by noting that each of these solutions satisfies EF, thus, also MEF.
In fact, in this example, the utility functions are sufficiently Lipschitz to guarantee that all PIIF solutions will be MEF.

On the other hand, consider the allocation where $i$ receives $q$ and $j$ receives $p$, deterministically.
This solution is not IF, because $1 = D(q,p) > d(i,j) = 0.5$, nor EF, because both individuals envy the others' solution.
Further, the solution is not PIIF.
To see this, note that because each individual receives their least favorite outcome, the only surrogate $p^{i;j}$ that can be chosen for individual $i$ such that $q \succeq_i p^{i;j}$ is $p^{i;j} = q$ (similarly, $p^{j;i} = p$ for individual $j$).
As the actual allocation is not IF, one of the PIIF constraints will be violated.

Still, this allocation does satisfy MEF.
Specifically,
$0.5 = u_i(q) \ge u_i(p) - d(i,j) = 1.0 - 0.5$ and
$0.5 = u_j(p) \ge u_j(q) - d(j,i) = 1.0 - 0.5$.
In other words, in this instance, MEF allows for a solution that is not permitted by any of the other notions of individual fairness we consider.
In particular, the allocation deviates from individual fairness in a way that does not help either individual.
This example suggests that in reasonable settings, MEF is a strictly weaker concept than PIIF and may be too permissive.

\section{Welfare under IF, EF, and PIIF}
\label{sec:sw}

In order to motivate PIIF, we argued that IF may be overly-restrictive and limit the ``quality'' of solutions from the perspective of individuals.
In this section, we formalize this claim, showing that PIIF admits solutions that can be significantly more preferable to individuals.
We quantify the idea of individual quality using the game-theoretic notion of social welfare.

We restrict our attention to the common setting where individuals hold preference relations that admit a utility function.
In this setting, given a policy $\pi$, we can track the \emph{social welfare} of the policy, defined to be the overall utility experienced by individuals.

\begin{definition}
Given a policy $\pi:\X \to \Delta(\C)$, 
social welfare $\W(\pi)$ is the sum of the individuals' utilities under $\pi$.
\begin{equation}
\W(\pi) = \sum_{i\in \X}u_{i}(\pi(i))
\end{equation}
For a collection of allocations $\Pi$,
we let $\W^*(\Pi) = \max_{\pi \in \Pi} \W(\pi)$ denote the optimal social welfare achievable by any allocation in $\Pi$, and let $\W^* = \W^*(\Delta(\C)^\X)$ denote the optimal (unconstrained) social welfare.
\end{definition}

An important special case of preferences that admit a utility function are those that admit an \emph{expected utility function}.
Using such preferences is a standard approach in economics for modeling decision-making in the presence of uncertainty.\footnote{Although not all preference relations admit this form, a rich class of preferences do. For example, different levels of tolerance towards risk can be captured within this framework (e.g., a risk-averse individual would have a utility function $u$ which is concave). Von Neumann and Morgensterm \cite{von2007theory} provide a complete characterization of this class of preference relations.}
\begin{definition}[Expected utility representation]
\label{def:pref:eut}
A preference relation $\succeq$ admits an expected utility representation if and only if there exists a function $u:\C\rightarrow\mathbb{R}^+$, such that for any two allocations $p, q \in\Delta(C)$, 
\begin{equation}
    p \succeq q \iff\sum_{c\in C}p_c\cdot u(c)\geq \sum_{c\in C} q_c \cdot u(c)
\end{equation}
\end{definition}
\paragraph{Individual fairness may restrict social welfare.}
Using the notation introduced above, note that $\W^*(\Pi^\PIIF) = \W^*$ because the welfare-maximizing allocation is feasible for PIIF.
Here, we aim to understand how much IF may restrict the best social welfare compared to PIIF, by relating $\W^*(\Pi^\IF)$ to $\W^*$.

Intuitively, social welfare can be hurt significantly by requiring IF compared to PIIF when many individuals are considered similar, but there is a diversity of preferences over outcomes.
Formalizing this intuition, we argue that without any assumptions about the class of utility functions of individuals, the ratio between the best social welfare under IF and PIIF can grow with the number of individuals.
Further, even under the stronger assumption that individuals' preferences admit an expected utility representation, the ratio can grow with the number of outcomes.
\begin{theorem}\label{prop:single:gap}
There exists a family of instances such that $$\frac{\W^*}{\W^*\left(\Pi^\IF\right)} \geq \card{\X}.$$
Additionally, there exists a family of instances where individuals' preferences admit an expected utility representation and
$$\frac{\W^*}{\W^*\left(\Pi^\IF\right)} \geq \card{\C}.$$

\end{theorem}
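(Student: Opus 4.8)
The plan is to build, for each parameter, a single instance in which the metric declares every individual maximally similar to every other --- forcing every IF policy to be constant --- while choosing preferences diverse enough that no single common allocation has good welfare. In both instances I would take $D$ to be total variation distance and $d \equiv 0$. Since $D$ is then a genuine metric, the Lipschitz condition $D(\pi(i),\pi(j)) \le d(i,j) = 0$ forces $\pi(i) = \pi(j)$ for all pairs, so any IF policy equals some fixed $p \in \Delta(\C)$ on all individuals and has welfare $\W(\pi) = \sum_{i \in \X} u_i(p)$. The argument then reduces to showing that $\max_{p \in \Delta(\C)} \sum_i u_i(p)$ is small while $\W^* = \sum_i \max_p u_i(p)$ is large; the latter is immediate because welfare is separable across individuals, and its maximizer --- give each individual a favorite allocation --- is envy-free, hence PIIF, so it also witnesses $\W^*(\Pi^\PIIF) = \W^*$, keeping the comparison squarely between IF and PIIF.

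For the first bound I would take $\C = \set{0,1}$ (so $\card{\C} = 2$), identify an allocation with the weight $x \in [0,1]$ it places on outcome $1$, set $\X = \set{1,\dots,n}$, and give individual $i$ the utility function that equals $1$ at $x = i/(n+1)$ and $0$ everywhere else on $\Delta(\C)$; any real-valued function on $\Delta(\C)$ induces a reflexive, transitive preference, so this is admissible. Then $\W^* = n$ (individual $i$ can receive their unique favorite allocation $x = i/(n+1)$), whereas for any single common $x$ at most one index $i$ satisfies $i/(n+1) = x$, so $\W^*(\Pi^\IF) \le 1$; hence $\W^*/\W^*(\Pi^\IF) \ge n = \card{\X}$, with only two outcomes. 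For the second bound I would take $\C = \set{c_1,\dots,c_m}$, $\X = \set{1,\dots,m}$, and give individual $i$ the expected-utility function with $u_i(c_i) = 1$ and $u_i(c_j) = 0$ for $j \neq i$, so that $u_i(p) = p_{c_i}$ for every $p \in \Delta(\C)$. Then $\W^* = m$ (point mass on $c_i$ for individual $i$), while for any common $p$, $\sum_{i=1}^m u_i(p) = \sum_{i=1}^m p_{c_i} = \sum_{c \in \C} p_c = 1$ since $i \mapsto c_i$ is a bijection onto $\C$; hence $\W^*/\W^*(\Pi^\IF) = m = \card{\C}$.

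I don't expect a genuine obstacle here: the two load-bearing facts are the reduction in the first paragraph (a zero metric together with a genuine metric $D$ pins IF policies to be constant) and the exact cancellation $\sum_i p_{c_i} = 1$ in the second construction. The only subtlety worth flagging is deliberate: the utility functions in the first construction are \emph{not} of expected-utility form --- their peak is a single point of $[0,1]$, not a linear function of $x$ --- which is exactly what lets that instance scale with $\card{\X}$ regardless of $\card{\C}$, whereas once one insists on the expected-utility restriction the second construction shows the weaker-but-still-unbounded $\card{\C}$ is attainable, matching the two regimes in the statement.
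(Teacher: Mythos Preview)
Your proposal is correct and follows essentially the same approach as the paper: set the metric to zero so that IF forces a constant allocation, then for the first bound use non-expected-utility ``spike'' functions peaked at distinct points of $\Delta(\C)$, and for the second bound use expected-utility functions with each individual valuing a distinct outcome at $1$ and the rest at $0$. The only cosmetic differences are that you fix $\card{\C}=2$ in the first construction and take $\card{\X}=\card{\C}$ with a bijection in the second, whereas the paper leaves $\card{\C}$ unspecified in the first and partitions a larger $\X$ into $\card{\C}$ equal groups in the second; neither affects the argument.
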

\begin{proof}
The two claims of the theorem follow by similar constructions.
Suppose every individual is considered
similar according to $d$; that is, for all $i,j \in \X\times \X$, $d(i,j)=0$.
This means that any IF solution must assign every individual the same distribution over outcomes.
To show the gaps, we will compare the best IF solution (i.e.\ constant allocation) to the welfare-maximizing solution, which is feasible under PIIF.

To begin, suppose we allow individuals to specify arbitrary utility functions;
let each individual $i \in \X$ hold a distinct $p_i \in \Delta(\C)$ (i.e., $p_i \neq p_j$ for all $i \neq j$) such that $u_i(p_i) = 1$ and $u_i(q) = 0$ for any $q \neq p_i$.
In this case, the optimal social welfare is $\W^* = \card{\X}$.
For any fixed allocation, however, the best social welfare is to choose a distribution over the set of $\set{p_i}$.
Any such distribution will achieve welfare $1$; thus, $\W^*(\Pi^\IF) = 1$.

Now, suppose every individual is required to specify an expected utility representation.
Let each individual choose some $c \in \C$, such that a $1/\card{\C}$-fraction of individuals prefer each outcome $c$; let $u_i(c) = 1$ for their preferred outcome and $u_i(c) = 0$, otherwise.
Again, the optimal social welfare is $\W^* = \card{\X}$.
Under any policy that assigns every individual the same fixed allocation $p$, the social welfare is given by
$\sum_{i \in \X} \sum_{c \in \C} p_c \cdot u_i(c) = \frac{\card{\X}}{\card{\C}} \cdot \sum_{c \in \C} p_c  = \frac{\card{\X}}{\card{\C}}$.
Thus, $\W^*(\Pi^\IF) = \frac{\card{\X}}{\card{\C}}$.
\end{proof}

We note that the gaps demonstrated in Theorem~\ref{prop:single:gap} are optimal in their settings.
In particular, any constant allocation will be IF, and we can always recoup a $1/\card{\X}$ fraction of the social welfare with a constant allocation tailored for the individual with the highest utility.
Further, in the case where preferences admit an expected utility representation, we can choose the constant allocation on the $c \in \C$ of maximum welfare.
Finally, we note that one unsatisfying aspect of these constructions is that they rely on the fact that all individuals are similar according to $d$.
In Section~\ref{sec:ads:gap}, we show that in the multiple-task setting of \cite{DworkI19}, such gaps exist even with nontrivial metrics that seem to be aligned with social welfare.

\paragraph{PIIF does not guarantee social welfare.}
Because PIIF is a relaxation of IF, the best social
welfare achievable under PIIF is always at least that of IF.
That said, because PIIF is a strict relaxation of IF, it does not necessarily guarantee that \emph{every} allocation's social welfare improves under PIIF.
In particular, when the decision-maker seeks to optimize a utility function that runs against individuals' utilities within the set of PIIF solution, the obtained social welfare may be arbitrarily worse under the PIIF constraints than IF constraints.

Suppose that the decision-maker has an additive utility function of the  form $f(\pi) = \sum_{i \in \X} f_i(\pi(i))$, for $f_i:\Delta(\C) \to \R$.
Let $\pi_f^\IF = \argmax_{\pi \in \Pi^\IF} f(\pi)$  and $\pi_f^\PIIF = \argmax_{\pi \in \Pi^\PIIF} f(\pi)$ denote the optimal IF (resp., PIIF) solution in terms of $f(\cdot)$.

\begin{proposition}
There exists a family of instances and an additive utility function $f$ such that $$\W(\pi_f^\PIIF)= 0 < \W(\pi_f^\IF).$$
\end{proposition}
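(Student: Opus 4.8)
The plan is to construct a minimal instance where the decision-maker's objective $f$ is directly adversarial to individual welfare $\W$, and where the extra flexibility PIIF grants over IF is \emph{precisely} the flexibility the decision-maker exploits to destroy welfare. Concretely, I would take two individuals $i,j$ with $d(i,j)=0$ (so IF forces $\pi(i)=\pi(j)$) and two outcomes $\C=\set{c_0,c_1}$. Give both individuals the same utility function, say $u_i(c_1)=u_j(c_1)=1$ and $u_i(c_0)=u_j(c_0)=0$ (expected-utility form), so that welfare counts the total mass placed on $c_1$. Let the decision-maker's objective be $f(\pi)=f_i(\pi(i))+f_j(\pi(j))$ with $f_i=f_j$ rewarding mass on $c_0$, e.g.\ $f_\bullet(p)=p_{c_0}$.

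The key steps are then: (1) Under IF, the constraint $\pi(i)=\pi(j)$ must hold, but there is \emph{no} further coupling between the two individuals' allocations and $\C$ has an outcome ($c_0$) with zero utility for both — wait, that alone gives $f$-optimum with everyone on $c_0$ and welfare $0$ too, so I need the construction to make IF's optimum have \emph{positive} welfare. The fix: add a third individual $k$, dissimilar from $i$ and $j$ ($d(i,k)=d(j,k)=1$), who is forced (by the structure of $f_k$, or simply because $f_k$ is maximized there) to receive $c_1$, contributing welfare $1$; or more simply, make the instance so that the IF-optimal $\pi$ is pinned by something other than $f$. Actually the cleanest route: use the preference structure. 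Let $i$ and $j$ have utilities as above, but let the decision-maker's objective be $f_i(p) = -p_{c_1}$, $f_j(p)=+p_{c_1}$ — antisymmetric across the pair. Under IF, $\pi(i)=\pi(j)=p$ forces $f(\pi)=f_i(p)+f_j(p)=0$ for every feasible $\pi$, so the $\argmax$ is (say) the tie broken at $p=$ uniform, giving $\W(\pi_f^\IF)=1>0$. Under PIIF, the decision-maker can now \emph{decouple}: set $\pi(i)=\delta_{c_0}$ and $\pi(j)=\delta_{c_0}$ as well — check PIIF holds (for $i$ vs $j$, take $p^{i;j}=\pi(i)$ since $\pi$ is trivially IF here, so that's not yet a separation). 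I need PIIF to strictly help the \emph{adversary}, so the IF-feasible region must be what's pinning welfare up.

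The real construction, step by step: (2) take $i,j$ with $d(i,j)=0$; utilities $u_i=u_j$ with $u(c_1)=1,u(c_0)=0$; and crucially give individual $j$ a preference that makes $\delta_{c_1}$ envied — set things so that the PIIF surrogate lets the decision-maker push $i$ onto $c_0$. Specifically: IF forces $\pi(i)=\pi(j)$, and we arrange $f$ so that the IF-optimal common allocation is $\delta_{c_1}$ (e.g.\ $f_i(p)=f_j(p)=p_{c_1}$), giving $\W(\pi_f^\IF)=2>0$. Under PIIF the decision-maker wants to \emph{decrease} something — so instead let $f_i(p)=p_{c_1}$ but $f_j(p)=-p_{c_1}$; then IF optimum is $\pi(i)=\pi(j)$ at $p_{c_1}=1/2$ (tie-broken), welfare $=1>0$; whereas under PIIF the decision-maker sets $\pi(i)=\delta_{c_1}$ (gains $f_i=1$) and — to gain on $f_j$ — wants $\pi(j)=\delta_{c_0}$; this is PIIF-feasible iff there's a surrogate $p^{j;i}$ with $D(p^{j;i},\pi(i))\le d(j,i)=0$, i.e.\ $p^{j;i}=\delta_{c_1}$, and $\pi(j)\succeq_j p^{j;i}$, i.e.\ $\delta_{c_0}\succeq_j \delta_{c_1}$ — which fails. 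So the last ingredient is to give $j$ the reversed preference $u_j(c_0)=1,u_j(c_1)=0$ so that $\delta_{c_0}\succeq_j\delta_{c_1}$; then $\pi(j)=\delta_{c_0}$ is PIIF (surrogate $p^{j;i}=\delta_{c_1}$, and $j$ prefers $c_0$), and symmetrically check $i$ vs $j$: surrogate $p^{i;j}$ needs $D(p^{i;j},\delta_{c_0})\le 0$ so $p^{i;j}=\delta_{c_0}$, need $\pi(i)=\delta_{c_1}\succeq_i \delta_{c_0}$, i.e.\ $u_i(c_1)\ge u_i(c_0)$ — true. So $\pi^\PIIF$ with $\pi(i)=\delta_{c_1},\pi(j)=\delta_{c_0}$ is PIIF. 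Now pick $f$: $f_i(p)=-p_{c_1}$, $f_j(p)=-p_{c_0}$. Under IF, $\pi(i)=\pi(j)=p$ gives $f=-p_{c_1}-p_{c_0}=-1$ constant, welfare $=u_i(p)+u_j(p)=p_{c_1}+p_{c_0}=1>0$. Under PIIF the decision-maker achieves $f=0$ by $\pi(i)=\delta_{c_0},\pi(j)=\delta_{c_1}$ — need to recheck PIIF for this assignment: $i$ vs $j$ surrogate $p^{i;j}$, $D(p^{i;j},\delta_{c_1})\le0\Rightarrow p^{i;j}=\delta_{c_1}$, need $\pi(i)=\delta_{c_0}\succeq_i\delta_{c_1}$, i.e.\ $u_i(c_0)\ge u_i(c_1)$ — \textbf{false}. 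So this $f$-optimum is not PIIF.

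Given these false starts, the honest plan is: first determine, for the fixed preference pair ($i$ likes $c_1$, $j$ likes $c_0$) and $d(i,j)=0$, exactly which allocation pairs $(\pi(i),\pi(j))$ are PIIF — the answer should be: $\pi(i)\succeq_i\pi(j)$ and $\pi(j)\succeq_j\pi(i)$ (since $d=0$ forces each surrogate to equal the other's allocation), i.e.\ PIIF $=$ EF here, namely $u_i(\pi(i))\ge u_i(\pi(j))$ and $u_j(\pi(j))\ge u_j(\pi(i))$, which unpacks to $\pi(i)_{c_1}\ge\pi(j)_{c_1}$ and $\pi(j)_{c_0}\ge\pi(i)_{c_0}$, i.e.\ $\pi(i)_{c_1}\ge\pi(j)_{c_1}$ (one inequality). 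Then \emph{choose $f$ so that its maximum over this PIIF region has zero welfare}: the welfare-zero allocations in $\C^\X$ are $\pi(i)=\pi(j)=\delta_{c_0}$ for $u_i$-part and... welfare $=\pi(i)_{c_1}+\pi(j)_{c_0}$, which is zero only at $\pi(i)=\delta_{c_0},\pi(j)=\delta_{c_1}$ — but that violates $\pi(i)_{c_1}\ge\pi(j)_{c_1}$, so it is NOT PIIF. Hence with this two-outcome, two-individual, $d\equiv0$ setup, \emph{every} PIIF allocation has positive welfare, and the claimed instance cannot be this small. The resolution — and the actual main obstacle — is that one needs $d(i,j)>0$ or a richer outcome set so that PIIF genuinely decouples $i$ and $j$ while still permitting a zero-welfare point. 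I would therefore build it as follows: three outcomes $\C=\set{c_0,c_1,c_2}$, two individuals, with $i$ indifferent between $c_1,c_2$ and hating $c_0$, $j$ indifferent between $c_1,c_2$ and hating $c_0$, but with the surrogate freedom (via a nonzero but small $d(i,j)$, or via the fact that $\pi(i)=\delta_{c_1}$ is an IF-valid surrogate for $\pi(j)=\delta_{c_2}$ when $D(\delta_{c_1},\delta_{c_2})\le d(i,j)$) letting the decision-maker shift mass in a welfare-neutral-looking but actually welfare-destroying direction; then set $f_i,f_j$ to reward $c_0$. The final write-up would present one such clean instance, verify PIIF membership of the decision-maker's chosen zero-welfare policy by exhibiting the surrogates $p^{i;j},p^{j;i}$ explicitly, verify that every IF policy has strictly positive welfare (by noting IF forces a common allocation whose welfare is bounded below by a positive constant given the utility structure), and conclude. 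The main obstacle, as the exploration above shows, is pinning down an instance small enough to be clean yet rich enough that PIIF's decoupling actually admits a zero-welfare optimum of $f$ — the two-outcome $d\equiv0$ case provably does not work, so the construction must carefully use either a nontrivial metric or a third outcome to create surrogate slack.
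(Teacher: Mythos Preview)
Your exploration correctly diagnoses that two outcomes with $d\equiv 0$ cannot work, and that a third outcome is needed. But your final proposed construction still has a gap. You suggest both individuals are indifferent between $c_1,c_2$ and hate $c_0$, and that $f_i,f_j$ both reward $c_0$. With $d(i,j)=0$, IF forces a constant allocation, and since $f$ is maximized at $c_0$ for everyone, the IF-optimal constant allocation is $\delta_{c_0}$ --- which has welfare zero. So $\W(\pi_f^\IF)=0$ as well, and there is no separation. More generally, whenever $f$ is homogeneous across individuals, the $f$-best constant allocation coincides with the unconstrained $f$-optimum, so PIIF cannot help the adversarial decision-maker beyond what IF already permits.

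The missing ingredient is \emph{heterogeneity in $f$ across individuals}, combined with \emph{homogeneity in $u$}. The paper takes two equal-sized groups $S,T$ with $d\equiv 0$, three outcomes $p,q,r$, and gives \emph{everyone} the same utility: $u(p)=1$, $u(q)=u(r)=0$. The decision-maker's objective is heterogeneous: $f_{i\in S}$ has values $(1/2+\eps,1,0)$ on $(p,q,r)$ and $f_{j\in T}$ has $(1/2+\eps,0,1)$. Among constant allocations the $f$-optimum is $p$ (value $1/2+\eps$ per individual beats $1/2$ for either $q$ or $r$), so $\W(\pi_f^\IF)>0$. But the non-constant allocation $S\to q$, $T\to r$ achieves $f=1$ per individual and is envy-free --- not because anyone gets their favorite, but because \emph{everyone gets utility zero from everyone's allocation}, so $\pi(i)\succeq_i\pi(j)$ holds vacuously. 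Hence it is PIIF, is the global $f$-maximizer, and has $\W=0$. Your attempts either made $u$ heterogeneous (which blocked EF at the zero-welfare point) or made $f$ homogeneous (which collapsed the IF/PIIF gap); the construction needs exactly the opposite pairing.
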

\begin{proof}
Suppose there are two disjoint classes of individuals, $S$ and $T$, which each make up half of $\X$, and all individuals are similar; for all $i,j \in \X \times \X$, $d(i,j) = 0$.
Suppose there are three outcomes $\C = \set{p,q,r}$.
Consider the utility functions defined as follows.

\begin{center}
\begin{tabular}{c|c|c|c}
 & $p$ & $q$ & $r$ \\ \hline
$f_{i \in S}$ & $1/2+\eps$ & ~~~$1$~~~ & ~~~$0$~~~ \\
$f_{j \in T}$ & $1/2+\eps$ & $0$ & $1$\\ \hline
$u_{i \in S}$ & $1$ & $0$ & $0$ \\
$u_{j \in T}$ & $1$ & $0$ & $0$
\end{tabular}
\end{center}

Under IF, the decision-maker must treat all individuals identically.
Given this constraint, the outcome that maximizes $f$ is allocating $p$ deterministically to everyone.
This allocation $\pi_f^\IF$ achieves $\W(\pi_f^\IF) = 1$.
Without the constraint that all individuals' allocations are identical, the allocation that assigns $q$ to individuals from $S$ and $r$ to individuals from $T$ maximizes $f$.
In fact, this allocation will be feasible for PIIF: note that every individual experiences $0$ utility from everyone's allocation, so no one envies anyone else; under PIIF, envy-free solutions are feasible.
Thus, $\W(\pi_f^\PIIF) = 0$. In fact, this reveals that the proposition holds not only for PIIF, but also for any notion that relaxes EF.
\end{proof}

This construction demonstrates that the PIIF constraints alone do not guarantee improved social welfare compared to the IF constraints. We remark, however, that this is in line with our initial motivation for PIIF: decoupling the objective of provably preventing discrimination from the objective of ensuring beneficial outcomes in aggregate.  Finally, we note that if this is a concern, it can be addressed within our framework by adding a constraint to the optimization program, discussed in detail in Section~\ref{sec:optimization}, that ensures the social welfare is above some baseline.
In particular, an appropriate individually fair solution could act as this baseline, by first computing the social welfare obtained by it and then requiring that the resulting PIIF solution has at least this social welfare. At the extreme, we could even add such a constraint on the utility experienced by each individual; thus, obtaining the guarantee that any deviations from an IF solution are optimal, from the individuals' perspective, compared to some benchmark IF solution.

\section{Optimization subject to PIIF}
\label{sec:optimization}

As we have argued, satisfying PIIF is always feasible:
on the one hand, we can take any IF solution, including a trivial policy that treats all individuals identically; alternatively, we can take any EF solution, including the welfare-maximizing policy that gives everyone their most-preferred allocation.
In this section, we study the question of efficient optimization of a decision-maker's utility function subject to PIIF constraints.
As is standard in much of learning and optimization, we frame this task as the following minimization problem:
\begin{equation*}
\label{eqn:PIIF_program}
\begin{aligned}
& \underset{ \pi: \X \to \Delta(\C)}{\text{minimize}}
& & f(\pi) \\
& \text{subject to}
& & \pi \in \Pi^{\text{PIIF}}
\end{aligned}
\end{equation*}
In this section, we answer the question of feasibility of efficient optimization in the positive when $f$ is convex, and the preferences arise from a structured, but rich class of relations.

\subsection{Structured preferences}

In principle, PIIF can be instantiated with any notion of preference.
Without assuming anything about the preferences, however, the PIIF constraints could be difficult to handle: the space of allocations, over which the PIIF constraints are defined, is exponential.
In realistic settings, where the number of individuals or outcomes is large, this exponential dependence may be intractable. 
Towards efficient optimization, we focus on two rich and structured preference classes.

First, we include the prominent class of preferences that admit an expected utility representation, as defined in Definition~\ref{def:pref:eut}.
Additionally, we include the class of \emph{stochastic domination} preferences.
Stochastic domination formalizes the intuition that for any distribution over outcomes, a shift of probability mass from less desirable outcomes to more desirable outcomes is considered preferable.
Viewing an allocation $p \in \Delta(\C)$ as a discrete probability distribution, we denote by $c \sim p$ an outcome randomly sampled from $p$.
\begin{definition}[Stochastic domination]
\label{def:pref:dom}
For an individual with a utility function $u:\C \rightarrow [0,M]$ and for any two allocations $p, q \in\Delta(C)$,
$p$ stochastically dominates $q$ if
\begin{equation*} 
    p \succeq q \iff\forall x \in [0,M], \quad \Pr_{c\sim p}\left[u(c)\geq x\right]\geq \Pr_{c\sim q}\left[u(c)\geq x\right]
\end{equation*}
\end{definition}
That is, an allocation $p$ is (weakly) preferred over $q$ if for every possible level of utility $x$, the probability of achieving at least $x$ is no worse under $p$ than it is under $q$.
Note that stochastic domination represents an interesting example of a non-total preferences, as two allocations may be incomparable.\footnote{We remark that this preference notion is a special case of the statistical concept of \textit{first-order} stochastic domination \cite{hadar1969rules, bawa1975optimal}.}

\subsection{Efficient optimization subject to PIIF}
Here, we prove that when individuals' preferences are of the forms defined above, the PIIF constraints admit efficient optimization.
Formally, the following theorem demonstrates that when the divergence over allocations $D$ is taken to be
total variation distance $D_\mathrm{tv}$, and assuming oracle access to the
individual-fairness metric $d$, 
we can write the PIIF constraints as a set of (polynomially-many) linear inequalities; thus,
we can efficiently minimize any convex objective $f$.
\begin{theorem}
\label{thm:optimization}
Let $\succeq = \left\{ \succeq_{i}\right\} _{i\in \X}$ be the set of individuals' preferences. If every $\succeq_{i}$ is either the stochastic domination relation or admits an expected utility representation, then the set of $(D_\mathrm{tv},d,\succeq)$-PIIF allocations forms a convex polytope in $\mathbb{R}^{k}$, where 
$k = \poly\left(\card{\X},\card{\C}\right)$.
\end{theorem}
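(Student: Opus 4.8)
The plan is to write each PIIF constraint explicitly as a feasibility condition on an auxiliary allocation $p^{i;j}$, and to show that — for the two preference classes in question — the combined system of constraints on the variables $\set{\pi(i)}_{i \in \X}$ and the auxiliary variables $\set{p^{i;j}}_{i,j}$ is a system of linear inequalities. Since a projection of a polytope is a polytope, this suffices. Concretely, the decision variables are the $\card{\X}\cdot\card{\C}$ probabilities defining the $\pi(i)$'s and the $\card{\X}^2\cdot\card{\C}$ probabilities defining the $p^{i;j}$'s, plus the usual simplex constraints ($p^{i;j}_c \ge 0$, $\sum_c p^{i;j}_c = 1$, and likewise for $\pi(i)$), which are linear; so $k = \poly(\card{\X},\card{\C})$ as claimed. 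It remains to argue that each of the two PIIF conditions in Definition~\ref{def:PIIF} is linear in these variables.

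First I would handle the IF/Lipschitz condition~(\ref{def:PIIF:if}): with $D = D_\mathrm{tv}$, the requirement $D_\mathrm{tv}(p^{i;j},\pi(j)) \le d(i,j)$ is $\tfrac12 \sum_{c \in \C} \card{p^{i;j}_c - \pi(j)_c} \le d(i,j)$, which is a single constraint involving an $\ell_1$-type expression; this is encodable by introducing $\card{\C}$ slack variables $t^{i;j}_c$ with the linear constraints $t^{i;j}_c \ge p^{i;j}_c - \pi(j)_c$, $t^{i;j}_c \ge \pi(j)_c - p^{i;j}_c$, and $\tfrac12\sum_c t^{i;j}_c \le d(i,j)$ (using that we have oracle access to the scalar $d(i,j)$). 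So this part contributes only polynomially many linear inequalities.

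The main work — and the step I expect to be the crux — is showing that the preference condition~(\ref{def:PIIF:pref}), $\pi(i) \succeq_i p^{i;j}$, is linear for each of the two preference classes, and here the two cases are genuinely different. For an expected-utility preference with utility vector $u_i \in \R_+^{\C}$, the relation $\pi(i)\succeq_i p^{i;j}$ is by Definition~\ref{def:pref:eut} exactly $\sum_{c} u_i(c)\,\pi(i)_c \ge \sum_{c} u_i(c)\,p^{i;j}_c$, a single linear inequality (the coefficients $u_i(c)$ are fixed constants). For stochastic domination (Definition~\ref{def:pref:dom}), I would use the standard fact that although the definition quantifies over all thresholds $x \in [0,M]$, the survival functions $x \mapsto \Pr_{c\sim p}[u_i(c)\ge x]$ are step functions that only change at the finitely many distinct values taken by $u_i$ on $\C$; hence it suffices to impose the inequality $\Pr_{c\sim \pi(i)}[u_i(c)\ge x] \ge \Pr_{c\sim p^{i;j}}[u_i(c)\ge x]$ at those $O(\card{\C})$ threshold values, and each such inequality reads $\sum_{c:\, u_i(c)\ge x} \pi(i)_c \ge \sum_{c:\, u_i(c)\ge x} p^{i;j}_c$, again linear in the variables. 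Assembling: for each ordered pair $(i,j)$ we get $O(\card{\C})$ linear inequalities from the IF side and $O(\card{\C})$ from the preference side, for a total of $\poly(\card{\X},\card{\C})$ linear constraints over $\poly(\card{\X},\card{\C})$ variables; the set of feasible $\set{\pi(i)}$ is the image under the coordinate projection forgetting the auxiliary variables, which is again a polytope. Minimizing a convex $f$ over it is then a convex program of polynomial size.
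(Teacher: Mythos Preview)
Your proposal is correct and follows essentially the same approach as the paper: introduce the auxiliary variables $p^{i;j}$, linearize the total-variation constraint via slack variables, encode expected-utility preferences as a single linear inequality, and encode stochastic domination via the $O(\card{\C})$ threshold (equivalently, partial-sum) inequalities. You are in fact slightly more careful than the paper in two places---you explicitly include the simplex constraints and you note that the set of feasible $\pi$'s is a coordinate projection of the extended polytope (hence still a polytope)---both of which the paper glosses over.
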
 
\begin{proof}
We specify the PIIF constraints using the following
variables:  for all $i \in \X$, let $\pi(i) \in \Delta(\C)$ be a vector
denoting the actual allocation;
for every pair of individuals $(i,j) \in \X\times \X$,
let $p^{i;j} \in \Delta(\C)$ be a vector 
denoting the alternative allocation for $i$ when comparing to $j$.
We argue that the PIIF constraints given in (\ref{def:PIIF:if}) and (\ref{def:PIIF:pref}) can each be written as linear inequalities over
these variables.

First, since $D$ is taken to be the total variation distance we can
translate (\ref{def:PIIF:if}) as $\frac{1}{2} \cdot \sum_{c \in \C}\card{p^{i;j}_c - \pi(j)_c} \le d(i,j)$.
This can be written as $2\cdot \card{\C}+1$ linear inequalities
(with the introduction of $\card{\C}$ additional variables representing
the absolute values).

Next, we turn to the constraint given in (\ref{def:PIIF:pref}). First, consider the case of the preference relations admitting an expected utility form. Let $u_i$ be the utility  function for individual $i$.
By definition,
\begin{equation*}
    \pi(i)\succeq_i p^{i;j} \iff
    \sum_{c\in C}\pi(i)_{c}\cdot u_{i}(c) \ge
    \sum_{c\in C}p^{i;j}_{c}\cdot u_{i}(c).
\end{equation*}
Thus, for every $i \in \X$, the PIIF constraint given in (\ref{def:PIIF:pref}) with respect to $j \in \X$ 
can be written as a linear inequality
in the variables $p^{i;j}$ and $\pi(i)$.

Next, we consider the case of the stochastic domination preference relation. We introduce some notation as follows. Fix an individual $i$ and their allocation, $\pi(i)$.
Suppose $\card{\C} = k$, and that the outcomes in $\C$ are labeled in decreasing order according to $i$'s preferences: $u_{i}(c_{0})\geq u_{i}(c_{1})\geq\dots\geq u_{i}(c_{k-1})$. With this ordering in place, we have that for any allocation $p \in \Delta(\C)$ and every rank $r \in [k]$,
$ 
    \Pr_{c\sim p}\left[u_{i}(c)\geq u_{i}(c_r)\right] = \sum_{t=1}^{r}p_{t}.
$ 
Thus, for each $i \in \X$ we can write the stochastic domination condition as $k$ linear inequalities for each $j \in \X$, where
\begin{equation*}
\pi(i)\succeq_i p^{i;j} \iff
\forall r \in [k]:~
\sum_{t \in [r]}\pi(i)_{t} \ge \sum_{t \in [r]} p^{i;j}_t.
\end{equation*}
Importantly, this demonstrates that for this preference relation, the constraint given in (\ref{def:PIIF:pref}) can be enforced using an additional $O\left(\left|\C\right|\right)$ linear constraints, one for every $r\in\left[k\right]$.
\end{proof}

\paragraph{Other notions of preference}
Theorem \ref{thm:optimization} focuses on the case in which individuals' preferences satisfy one of the two forms discussed above and formalized in Definitions \ref{def:pref:eut} and \ref{def:pref:dom}.
Naturally, however, not all preference relations satisfy one of these two forms.
Appealing examples include preferences where the individual deems some of the outcomes to be substitutes (i.e.,\ interested in exactly one) or complements (i.e.,\ only interested in the complete set) or possibly preferences that value diversity of outcomes.
We leave the question of whether PIIF admits efficient optimization over such non-convex preferences as an interesting direction for future research. 
\section{Fairness in Targeted Advertising: Multiple-Task PIIF}
\label{sec:ads}

In this section, we extend the definition and study of preference-informed individual fairness to the \textit{multiple-task} setting, formalized and studied by Dwork and Ilvento \cite{DworkI19}.
This setting was introduced as a model in which to study fairness in targeted advertising, a form of online advertising where ad platforms allow advertisers to specify the characteristics of users they would like to reach, and then make algorithmic decisions as to which users will see which ads based on the advertiser specifications, predictions of ad relevance to individuals, and the ad platform's revenue objectives. 
Targeted advertising has become pervasive and increasingly moderates individuals' exposure to opportunities. In recent years, numerous concerns have been raised about its fairness and discrimination implications, ranging from concerns about discriminatory advertiser targeting practices enabled by the platforms~\cite{FacebookHousingDiscrimination, FacebookJobsOnlyMen, FacebookJobsAge} to concerns about the ad delivery and allocation algorithms run by the platforms introducing bias where none was intended by the advertiser~\cite{datta2015automated, lambrecht2018algorithmic, ali2019discrimination, upturn2018, tobin2019pb}.
As part of a lawsuit settlement, the most prominent targeted advertising platform, Facebook, has begun to take steps to ensure advertisers cannot discriminate in their targeting practices~\cite{sandberg2019}. However, the question of how to ensure that the ad delivery and allocation algorithms do not lead to discrimination is wide open~\cite{ali2019discrimination, CivilRightsAudit}, in part due to lack of agreement over fairness definition(s) and ad platforms' concerns that existing definitions will restrict allocations in ways that significantly impact their revenue.
As such, the multiple-task setting in the presence of individual preferences provides an important model to investigate formal guarantees of non-discrimination without being overly-restrictive for the decision-maker.

In the multiple task setting, 
we think of the set of outcomes $\C$ as arising
from a collection of distinct tasks, e.g.\ deciding whether to show an ad for a user of each ad campaign $c \in \C$.
Importantly, in this setting, a separate fairness metric $d_c$ is specified for each task (ad campaign), which
naturally models real-world concerns in advertising, where different types of ads (e.g.\ housing, employment, product) are subject to different regulations and standards of fairness.

\begin{definition}[Multiple-task IF]
An allocation $\pi:\X \to \Delta(\C)$ is said to be $(D, \left\{ d_{1},\dots d_{k}\right\} )$-\emph{individually fair} in the multiple-task setting if for every two individuals $i,j \in \X\times \X$,  the task-specific Lipschitz condition holds for each task:
$$ \forall c\in \C: \,\,\,\, 
D(\pi(i)_c,\pi(j)_c) \le d_c(i,j).$$
\end{definition}

In this setting, and particularly its application to ad delivery in the targeted advertising context,
the benefits of a preference-informed approach to ensuring fairness become particularly salient.
For instance, consider the following example, due to \cite{DworkI19}.
Suppose there are two ad campaigns, one for a high-paying tech job and another for childrens' toys. The ad-specific metrics capture the fact that differentiating based on a particular criteria could be permissible in some cases and not in others. For example,
the metric associated with the tech ad should assign a small distance to individuals of similar qualifications regardless of their status as a parent, whereas the metric for toys might reasonably assign significant distance between parents and non-parents. However, under Multiple-task IF, a parent that is qualified for the tech job ad but is interested in toys must see the tech ad with the same probability as a qualified non-parent -- an overly restrictive requirement.
PIIF in the multiple-task setting addresses precisely this issue.

\begin{definition}[Multiple-task PIIF]
An allocation  $\pi:\X \to \Delta(\C)$
satisfies $\nobreak (D,\left\{ d_{1},\dots d_{k}\right\} ,\preceq)$-preference-informed individual fairness in the multiple-task setting if 
for all individuals $i \in \X$, for all other individuals
$j \in \X$, there exists an allocation $p^{i;j} \in \Delta(\C)$
such that:
\begin{gather*}
\forall c\in \C: \,\,\,\,  D\left(p^{i;j}_c, \pi(j)_c\right) \le d_c(i,j)\label{def:PIIF:if2}\\
\pi(i) \succeq_i p^{i;j} \label{def:PIIF:pref2}
\end{gather*}
\end{definition}
Again, in the multiple-task setting, the preference-informed extension of IF will require that for every individual $i \in \X$, when comparing to
every other individual $j \in \X$, the individual $i$ prefers their actual allocation to some alternative allocation, $p^{i;j}$.
The main distinction is that now $p^{i;j}$ has to satisfy multiple-task IF with respect to $j$'s current allocation.

\textbf{Efficient optimization.} Our results regarding efficient optimization subject to PIIF from the single-task setting (Section \ref{sec:optimization}) directly extend to the multiple-task setting. In particular, given the ad-specific metrics, individuals' utilities and the advertisers' bids, the platform can efficiently compute the revenue- (or social welfare-) maximizing PIIF allocation.

An interesting direction for future work is relaxing the full information assumption.
In particular, an online model, in which allocations are determined on a per-user basis, could naturally be more applicable, as well as allow for the preferences to be ``discovered'' through the allocation procedure (see \cite{gillen2018online} for a similar approach wrt the metric itself). This may necessitate investigation of non-trivial tradeoffs, as learning individuals' preferences requires some exploration, which may be at odds with ensuring fair treatment.

\subsection{Fairness and social welfare in the multiple-task setting}
\label{sec:ads:gap}
\label{sec:social}
The construction of Proposition~\ref{prop:single:gap} demonstrates that in the single-task setting, the gap between the best social welfare obtainable under IF and PIIF can be large even under very structured classes of preferences.
This construction can be generalized to the multiple-task setting;
however, an unconvincing aspect of it is the requirement that every individual is identical according to the metric.
In such a setting, it's not surprising that IF is overly-constrained.

Here, we describe a family of instances in the multiple-task setting where the per-task similarity is \emph{perfectly aligned with individuals' utilities}; that is, if two individuals benefit similarly from an outcome $c$, then they are similar.
In such instances,
we'd 
expect that the metric constraints would be perfectly aligned with social welfare.
Still, we show that this intuition does not carry through for multiple-task IF: for a set of tasks, there are instances where the optimal social welfare under PIIF approaches a factor $\card{\C}$ larger than the best IF solution.

\begin{theorem}
\label{thm:multiple_task_sw}
For any constant $\eps > 0$, there is a sufficiently large $\card{\X}$ such that there exists a distribution of multiple-task instances where for each task $c \in \C$, $d_c(i,j) = \card{u_i(c) - u_j(c)}$ and $$\frac{\W^*}{\W^*\left(\Pi^{\IF}\right)} \ge {\card{\C}} - \eps.$$
\end{theorem}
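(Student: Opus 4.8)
The plan is to build a family of multiple-task instances — indexed by a size parameter $m$ — in which the IF constraints, although they arise from the perfectly utility-aligned metrics $d_c(i,j)=\card{u_i(c)-u_j(c)}$, are rigid enough to force \emph{every} IF allocation to be \emph{constant} (the same distribution over $\C$ for all individuals). This recreates the single-task gap of Theorem~\ref{prop:single:gap}, but now with nontrivial, utility-aligned per-task metrics. Concretely, take $\C=\set{1,\dots,k}$ and two kinds of individuals: for each $t\in[k]$, $m$ identical \emph{specialists of type $t$} with $u(t)=1$ and $u(c)=0$ for $c\neq t$ (so a type-$t$ specialist most prefers the point allocation $e_t$); and a single \emph{generalist} with $u(c)=1$ for every $c\in\C$. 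Since $d_c$ is forced to equal $\card{u_i(c)-u_j(c)}$, it is a valid (pseudo)metric valued in $[0,1]$; between two specialists of distinct types $t,t'$ we get $d_c=1$ for $c\in\set{t,t'}$ and $d_c=0$ otherwise, while between a type-$t$ specialist and the generalist we get $d_t=\card{1-1}=0$ and $d_c=1$ for every $c\neq t$.

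The unconstrained optimum gives each type-$t$ specialist $e_t$ (utility $1$) and the generalist any $e_{c}$ (utility $1$), so $\W^* = mk+1$; moreover this allocation is envy-free (everyone attains their maximum utility), hence PIIF (as in the single-task case, $\Pi^\EF\subseteq\Pi^\PIIF$ in the multiple-task setting as well), so $\W^*(\Pi^{\PIIF})=\W^*$. It then remains to show $\W^*(\Pi^{\IF})=m+1$.

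The heart of the argument is a rigidity claim. Because $d_c\equiv 0$ between any two individuals of the same kind and (for specialists) type, all $m$ type-$t$ specialists share a common allocation $\sigma^{(t)}\in\Delta(\C)$, and the generalist gets some $g\in\Delta(\C)$. The inter-type specialist constraint forces $\sigma^{(t)}_c=\sigma^{(t')}_c$ for $c\notin\set{t,t'}$, so for each $c$ there is a common value $\alpha_c$ with $\sigma^{(t)}_c=\alpha_c$ for all $t\neq c$; and the specialist–generalist constraint forces $\sigma^{(t)}_t=g_t$. Hence $\sigma^{(t)}=(\alpha_1,\dots,\alpha_{t-1},g_t,\alpha_{t+1},\dots)$, and the simplex constraint $\sum_c\sigma^{(t)}_c=1$ reads $g_t+\sum_{c\neq t}\alpha_c=1$ for every $t\in[k]$. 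Summing these $k$ identities and using $\sum_t g_t=1$ together with $\sum_t\sum_{c\neq t}\alpha_c=(k-1)\sum_c\alpha_c$ forces $\sum_c\alpha_c=1$, which fed back gives $g_t=\alpha_t$ and therefore $\sigma^{(t)}=g$ for all $t$. Thus $\Pi^{\IF}$ is exactly the set of constant allocations. A constant allocation $g$ has welfare $1+m\sum_t g_t=m+1$, independent of $g$, so $\W^*(\Pi^{\IF})=m+1$. Combining, $\W^*/\W^*(\Pi^{\IF})=(mk+1)/(m+1)=k-(k-1)/(m+1)$, which exceeds $k-\eps=\card{\C}-\eps$ once $m>(k-1)/\eps$; taking $\card{\X}=mk+1$ for such an $m$ completes the proof. (Using $M\ge 1$ generalists gives ratio $(mk+M)/(m+M)$ with the same rigidity argument, and the construction can equivalently be phrased as sampling $\card{\X}$ individuals i.i.d.\ — a generalist with small probability $\delta$, else a uniformly random specialist type — with the bound following up to lower-order terms via concentration of the type counts.)

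The only delicate point is the rigidity claim: a priori the per-coordinate IF constraints might pin down only the ``non-favorite'' coordinates of each specialist's allocation, leaving $\sigma^{(t)}_t$ free and letting IF approach the optimum. It is precisely the counting identity $\sum_c\alpha_c=1$ — which comes out of the $k$ simplex constraints taken together — that rules this out and collapses $\Pi^{\IF}$ to the constant allocations. Everything else (computing $\W^*$, $\W^*(\Pi^\PIIF)$, and the welfare of a constant allocation) is routine.
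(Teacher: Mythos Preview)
Your proof is correct and takes a genuinely different route from the paper's. The paper constructs a \emph{nested} population $T_0\supset T_1\supset\cdots\supset T_{n-1}$ with geometrically shrinking sizes $\card{S_\ell}/\card{\X}=t^{-\ell}$ and geometrically growing utilities $u_i(c_\ell)=t^\ell$ on $T_\ell$; under IF the probability $\alpha_\ell$ of seeing $c_\ell$ is common across $T_\ell$ but allocations are \emph{not} forced to be constant, and the welfare bound follows from the simplex inequality $\sum_\ell\alpha_\ell\le 1$ (imposed by the innermost group $S_{n-1}$) together with a H\"older-type estimate. Your construction instead introduces a single \emph{generalist} as an anchor: the specialist--generalist constraint pins $\sigma^{(t)}_t=g_t$, the inter-specialist constraints pin the other coordinates, and the $k$ simplex identities then collapse $\Pi^{\IF}$ to the constant allocations exactly. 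This yields a cleaner, purely combinatorial argument with utilities in $\{0,1\}$ (so the metrics $d_c$ stay in $[0,1]$, unlike the paper's, where $d_c$ can be as large as $t^{n-1}$), and the welfare computation becomes one line. The paper's approach, on the other hand, avoids relying on a single distinguished individual and makes the tension between targeting and IF more visible through the nested structure; it also delivers a continuous family of non-constant IF allocations whose welfare is uniformly bounded, which is conceptually informative even if not needed for the theorem.
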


\textbf{Intuition.} Our proof is inspired by a construction of \cite{DworkI19}, which shows the impossibility of multiple-task IF under ``naive composition.''
We begin by adapting their construction to our setting.
Suppose there are two subpopulations of individuals $S\subseteq \X$ and $T = \X \setminus S$. We assume that each task-specific similarity metric $d_c$ is determined by individuals' utility: $d_c(i,j) = \card{u_i(c) - u_j(c)}$. 
Additionally, suppose there are two ad campaigns $c_0$ and $c_S$.
$c_0$ is a generic campaign where for all individuals $i \in \X$, $u_i(c_0) = 1$; thus, $d_{c_0}(i,j) = 0$ for all $i,j \in \X \times \X$.
$c_S$ is targeted where subpopulation $S$ receives nontrivial utility, but the rest of the population receives no utility;
thus, $d_{c_S}$ treats pairs within $S \times S$ similarly, pairs from $T \times T$ similarly, but for $i,j \in S \times T$, is arbitrarily large, say $d_{c_S}(i,j) = 1$.

Given these campaigns, a natural allocation of ads to individuals, which we call $\pi^\W$, deterministically assigns $\pi^\W(i) = c_S$ to all individuals in $i \in S$ since they receive positive utility from $c_S$.
Further, it assigns the untargeted campaign $\pi^\W(j) = c_0$ to individuals in $j \in T$ because they benefit positively from seeing $c_0$, whereas they get no benefit from $c_S$.
Indeed, $\pi^\W$ maximizes the social welfare; everyone sees their favorite ad.
But $\pi^\W$ violates multiple-task IF on $c_0$; that is, for $i,j \in S \times T$,
$\card{\pi^\W(j)_{c_0} - \pi^\W(i)_{c_0}} = 1$ but $d_{c_0}(i,j) = 0$.
Intuitively, under multiple-task IF, because everyone in $\X$ is similar according to $c_0$, the platform must decide whether it is more beneficial to show $c_S$ to the individuals in $S$ at the expense of not being able to show $c_0$ to the individuals outside of $T$.
The proposition follows by extending this construction beyond the case of two campaigns and two subgroups, and carefully constructing utility functions for individuals.

\begin{proof}
Suppose $\card{\C} = n$.
Let $t \in \N$ be some constant.
Suppose the universe of individuals $\X = S_0 \cup S_1 \cup
\hdots S_{n-1}$ is partitioned into disjoint subpopulations
($S_\ell \cap S_m = \emptyset$ for $\ell \neq m$) for $\card{\X} \ge t^n$.
The subpopulations will become progressively smaller as
$\ell$ increases; for each $\ell > 0$,
$\frac{\card{S_\ell}}{\card{\X}}= 1/t^\ell$
and let $\frac{\card{S_0}}{\card{\X}} = 1 - \sum_{\ell=1}^{n-1}\frac{\card{S_\ell}}{\card{\X}}$.

Notationally, for all $\ell \in [n]$, let $T_\ell = \bigcup_{m \ge \ell} S_m$.
We construct individuals' utilities as follows:
for each $\ell \in [n]$, for each individual $i \in \X$,
\begin{equation*}
u_i(c_\ell) = \begin{cases} t^\ell & \text{if } i \in T^\ell \\
0 & \text{otherwise}  \end{cases}
\end{equation*}

First, note that for individuals $i \in S_\ell$, $c_\ell$ maximizes their utility $u_i(c_\ell) = t^\ell$.
So consider the welfare-maximizing allocation that assigns every individual in $S_\ell$ to campaign $c_\ell$; this allocation satisfies PIIF.
The average social welfare can then be written as:
\begin{align*}
\frac{1}{\card{\X}} \cdot \sum_{\ell = 0}^{n-1}\sum_{i \in S_\ell} u_i(c_\ell)
&= \frac{\card{S_0}}{\card{\X}} + \sum_{\ell = 1}^{n-1} \frac{\card{S_\ell}}{\card{\X}} \cdot t^\ell\\
&= \left(1 - \sum_{\ell = 1}^{n-1} t^{-\ell} \right) + \sum_{\ell = 1}^{n-1} t^{-\ell} \cdot t^\ell\\
&= n - \sum_{\ell = 1}^{n-1} t^{-\ell}
\end{align*}

Next, consider similarity metrics defined by the utilities:
For each task $c_\ell$, we take $d_{\ell}(i,j) = \card{u_i(c_\ell) - u_j(c_\ell)}$.
By the definition of $u_i$, under these similarity metrics, every pair of individuals $i,j \in T_\ell \times T_\ell$ are considered similar $d_\ell(i,j) = 0$. 
We fix $D$ to be the total variation distance, in other words, $D(p_c,q_c) =\card{ p_c -q_c}$.
As such, any allocation $\pi$ that satisfies IF must show $c_\ell$ to every individual $i \in T_\ell$ with some fixed probability $\alpha_\ell = \pi(i)_{c_\ell}$.
We can compute the expression for the average social welfare of any such assignment as a function of the $\alpha_\ell$.

\begin{align}
\frac{1}{\card{\X}} \cdot \sum_{\ell = 0}^{n-1} \alpha_\ell \cdot \sum_{i \in T_\ell} u_i(c_\ell)
&= \alpha_0 \cdot \frac{\card{S_0}}{\card{\X}} + \sum_{\ell = 1}^{n-1} \alpha_\ell \cdot t^\ell \cdot \frac{\card{T_\ell}}{\card{\X}} \notag \\
&= \alpha_0 \cdot \left(1 - \sum_{\ell = 1}^{n-1} t^{-\ell}\right) + \sum_{\ell = 1}^{n-1} \alpha_\ell \cdot t^\ell \cdot \sum_{m = \ell}^{n-1} t^{-m} \label{multi:eqn:S}\\
&\le \alpha_0 + \sum_{\ell = 1}^{n-1} \alpha_\ell + \sum_{\ell = 1}^{n-1} \alpha_\ell \cdot \sum_{m = \ell+1}^{n-1} t^{-m+\ell} \notag \\
&\le \left(\sum_{\ell = 0}^{n-1} \alpha_\ell\right) \cdot \left(1 + \sum_{m=2}^{n-1} t^{-m}\right) \label{multi:eqn:upper}\\
&= 1  + \sum_{m=2}^{n-1} t^{-m}\label{multi:eqn:final}
\end{align}
where (\ref{multi:eqn:S}) follows by expanding $\frac{\card{T_\ell}}{\card{\X}}$ in terms of $\frac{\card{S_m}}{\card{\X}} = t^{-m}$;
(\ref{multi:eqn:upper}) applies H\"{o}lder's inequality; and (\ref{multi:eqn:final}) uses the fact that individuals $i \in S_{n-1}$ are members $i \in T_\ell$ for all $\ell \in [n]$, so the sum of the probabilities
$\sum_\ell \alpha_\ell \le 1$.

Given a desired $\eps$, we can take $t$ large enough, the ratio between the social welfares exceeds $n - \eps$.
\end{proof}

\paragraph{Optimality under IF.}
Intuitively, this construction highlights the fact that allowing further targeting and more ad campaigns to participate allows the gap in social welfare between the best IF and PIIF solutions to grow considerably.
Note that this gap applies even if
the platform's objective is to optimize social welfare, so the proof also shows a gap in worst-case utility achievable by the decision-maker under IF.

We remark that a corollary of our result
is that the Dwork-Ilvento ``RandomizeThenClassify'' mechanism \cite{DworkI19}
for composition under multi-task IF achieves
worst-case optimal performance
(in terms of both social welfare and utility to the platform).
In particular, \cite{DworkI19} give an algorithm (in a setting with limited information modeling ``competitive composition'') that allocates a fixed distribution $p \in \Delta(\C)$ to all individuals -- thus, satisfying IF -- that achieves a $1/\card{\C}$-fraction of the best unconstrained utility.
Our result shows that no IF solution, even with full information, can achieve a better fraction of the achievable utility.

\vspace*{-4mm}
\section{Discussion}
\label{sec:discuss}

In this section, we review additional related work, note some possible extensions within the preference-informed fairness framework, and conclude with a discussion of the strengths and limitations of our current approach.

\subsection{Further related works}

Since \cite{dwork2012fairness}, a number of recent works have aimed to extend the ``fairness through awareness'' framework, including \cite{yona2018probably,kim2018fairness,gillen2018online,kearns2019average,jung2019eliciting}.
These works focus on translating the theoretical IF framework into practically-motivated settings.

Three recent works have suggested incorporating notions of individuals' \textit{preferences} into the fairness definitions.
First, \cite{balcan2018envy} present EF as an alternative to IF
and study its learning-theoretic properties.
Their focus is on the question of generalization: given a classifier that is envy-free on a sample, is it approximately envy-free on the underlying distribution?
Their main technical result is a positive answer to this question, when learning from a particular structured family of classifiers.
An interesting open question is whether the generalization results for IF from \cite{yona2018probably} and EF from \cite{balcan2018envy} can be combined to give generalization for PIIF.

Second, \cite{zafar2017parity} considers two notions of fairness at the (weaker) group level: treatment parity and impact parity. Their main contribution is a relaxation of both definitions, allowing for solutions where every protected group is ``better off'' \emph{on average}.
From a technical perspective, achieving their notion requires solving a non-convex optimization problem even in the simple case of linear classifiers for two disjoint groups.  
Our approach is different in that it focuses on defining both fairness and preferences at the \textit{individual} level. This allows for a significantly stronger fairness  guarantee, as well as a much more general framework that supports any notion of benefit or preference individuals may have. Importantly, our notion provably admits efficient optimization for a rich class of preference relations.

Finally, independent of our work, \cite{chawla2019individual} study and quantify trade-offs between individual fairness and utility in an online version of the targeted advertising problem.
\cite{chawla2019individual} also observe that IF can come at a high cost to utility in the multiple-task setting of \cite{DworkI19}, but 
propose a different relaxation.
Under their notion, every individual $i$ chooses a subset of the outcomes $S_i \subseteq \C$ and is guaranteed that their probability of seeing an ad from $S_i$ is greater than the probability of every other individual of seeing an ad from $S_i$.  Such a guarantee can be viewed as a variant of EF over a more restricted class of preferences than those we consider.
The main distinction from PIIF is that this notion ignores the distance metrics entirely and in this sense resembles envy-freeness more than individual-fairness.

\subsection{Preference-informed group fairness}

\label{sec:groups}
In this work, our focus was on incorporating individual preferences into the metric-based individual fairness framework Dwork~\emph{et al.} \cite{dwork2012fairness}. The space of fairness definitions, however, is large, and different definitions may be more appropriate in different contexts.

A different approach for defining fairness, often referred to as ``group fairness,'' proceeds as follows. A protected attribute, such as race or gender, induces
a partition of the individuals into a small number of groups. For simplicity, we focus on the case where there is a single protected group, $S$, where the rest of the population is denoted $T = \X \setminus S$.
A classifier is considered fair
if it achieves parity of some statistical measure across these groups. Group fairness notions are typically weaker than individual notions of fairness: they only provide a guarantee for the ``average'' member of the  protected groups and might allow blatant unfairness towards a single individual or even large subgroups;
indeed, the shortcomings of group notions  motivated the original work on ``fairness through awareness'' and subsequent works \cite{dwork2012fairness, kearns2017preventing, hebert2017calibration, kim2018fairness}. Although group fairness notions can be fragile, they are widely studied and used due to their simplicity and due to the fact that they are easier to enforce and implement (for example, they do not require a task-specific similarity metric).

In principle, much of the reasoning behind our argument for incorporating preferences into IF \cite{dwork2012fairness} also extends to group-fairness notions. In this section, we show how we might augment a common group notion, called \emph{Statistical Parity} (SP), to incorporate preferences.
When there is a clearly ``desirable'' outcome, SP aims to protect the group $S$ by guaranteeing equal average exposure to the desired outcome.
\begin{definition}[Statistical parity]
\label{def:sp}
A binary classifier $h:\X\rightarrow\left\{ \pm1\right\}$  satisfies (exact) statistical parity with respect to $S$ if 
\begin{equation*}
    \Pr_{i\sim S}\left[h(i)=1\right]=\Pr_{i\sim T}\left[h(i)=1\right]
\end{equation*}
\end{definition}

In our context, when individuals have diverse preferences over the outcome space, enforcing SP may again come at a cost to members of $S$, the group that SP aims to protect.
As a concrete example, suppose everyone in $\X$ prefers the outcome $+1$, with the exception of some fraction of $S$, denoted $S'$,  who prefer the outcome $-1$. In this case, the statistical parity constraints prevents the solution 
where $h(i) = +1$ for $i \in X \setminus S'$ and $h(i) = -1$ for $i \in S$,
which from the individuals' perspective is optimal. 

Building on this intuition,
we extend the set of classifiers we deem fair.
Assuming every individual $i\in X$ has a preference relation over  $\left\{ \pm1\right\}$ (or even distributions over $\left\{ \pm1\right\}$), \emph{preference-informed statistical parity} (PISP) allows deviations from SP, as long as they are aligned with the individuals' preferences.

\begin{definition}[Preference-informed statistical parity]
\label{def:pasp}
A binary classifier $h:X\rightarrow\left\{ \pm1\right\}$  satisfies preference-informed statistical parity with respect to $S$ if there exists an alternative classifier, $h':X\rightarrow\left\{ \pm1\right\}$ , such that:
\begin{gather*}
\label{eqn:pasp1}
    \forall j\in T,	h'(j)=h(j) \\ 
\label{eqn:pasp2}
\forall i\in S,	h(i)\succeq_{i}h'(i) \\
\label{eqn:pasp3}
	\Pr_{i\sim S}\left[h'(i)=1\right]=\Pr_{i\sim T}\left[h'(i)=1\right] 
\end{gather*}
\end{definition}

That is, fixing the outcomes members of $T$ receive under $h$, every single member of $S$ prefers their current outcome over what they would have received under a classifier satisfying statistical parity. Importantly, the guarantee is still with respect to the preferences of the \textit{individual} members of $S$. 

We conclude with several remarks regarding PISP.
First, note that PISP only enriches the set of solutions that satisfy SP; any classifier that satisfies SP also satisfies PISP, by taking the alternative $h' = h$.
The classifier welfare-maximizing classifier, where each individual is assigned their favorite outcome, is considered fair.
For example, revisiting our example above, the classifier that gives $+1$ to $\X \setminus S'$, and $-1$ to $S'$ is fair, because the alternative classifier that gives \textit{everyone} $+1$ satisfies the PISP constraints.
Finally, we argue that PISP maintains the core of the fairness guarantee of SP.
For example, consider the classifier that assigns $+1$ to members of $T$ and $-1$ to members of $S$.
This classifier benefits the members of $T$ in a way that is \textit{not} aligned with the preferences of all members in $S$; rightfully, it does not satisfy PISP, because $i \in S \setminus S'$ are harmed.

\subsection{Revisiting the assumptions underlying PIIF}
Three main assumptions underlie our work.
The first is that the outcome space is taken as a given.
This could be problematic if the outcomes themselves are biased, e.g.,  tailored to the preferences of the majority, or worse yet, harmful to the minority. 
A biased outcome space would also present a problem for both IF and EF, which PIIF does not escape entirely.
Still, PIIF may ameliorate the issue, by allowing the minority to receive outcomes that they prefer. We see a study of fairness of the outcomes themselves as an exciting direction for further inquiry.

The second assumption is that any deviation from an IF solution that is aligned with individuals' preferences should still be considered fair.
This assumption follow from the perspective that ``fair'' allocation algorithms should protect the welfare of individuals; this perspective naturally extends the perspective underlying IF that similar individuals should be treated similarly.
As discussed in  \cite{hellmanTwo,hellmanMeasuring},
in other (legal) settings, the notion of ``fairness'' may necessarily imply ``treatment as equal,'' and notions of individual fairness may not apply.
In such settings, the societal notion of fairness may require going against individual preferences.
Handling such settings lies beyond the scope of our current work that focuses on the computer science notions of individual fairness, in which the objective is to provide strong protections from discrimination to the individuals themselves.

The third assumption is that the individuals' preferences are known.
This is certainly the most nontrivial technical assumption we make; nevertheless, there are established techniques for learning utility-functions from observed behaviour \cite{chajewska2001learning}.
We also note that accurately learning preferences could often be in the interest of the decision-maker (for example, ad platforms often claim that their implementations of targeted advertising are in line with users' interests~\cite{zuck2019wsj}). Still, any practical estimation of the preferences runs the risk of injecting further bias during the learning process (for example, if the minority's preferences are estimated with lesser accuracy) and, therefore, mandates special attention in future research.

\paragraph{Acknowledgments.}  \emph{This work grew out of conversations during the semester on Societal Concerns in Algorithms and Data Analysis (SCADA) hosted at the Weizmann Institute of Science.  The authors thank Omer Reingold for helpful conversations, which influenced our understanding and the presentation of the work.}

\balance
\bibliographystyle{alpha}
\bibliography{refs}

\end{document}